\documentclass{llncs}
\usepackage[german,english]{babel}
\usepackage{times,theorem}
\usepackage{latexsym,color,graphics}

\usepackage{epsfig}
\usepackage{amssymb}
\usepackage{amsmath}
\usepackage{amsfonts}
\usepackage{xspace}
\usepackage{graphicx}
\begingroup
\catcode`\~=11
\gdef\urltilde{\lower 0.6ex\hbox{~}}
\endgroup

 \newcommand{\B}{\mathcal{B}}
 \newcommand{\D}{\mathcal{D}}
\newcommand{\E}{\mathcal{E}} 
 
\newcommand{\I}{\mathcal{I}} 
\newcommand{\K}{\mathcal{K}} \renewcommand{\L}{\mathcal{L}}
 
 \renewcommand{\P}{\mathcal{P}}
 
 \newcommand{\T}{\mathcal{T}}
 \newcommand{\V}{\mathcal{V}}
\newcommand{\W}{\mathcal{W}} 
 \newcommand{\Z}{\mathcal{Z}}

\title{Probabilistic Extension of Neuro-Symbolic AGI Robots  based on Belnap's Many-sorted Intensional FOL}
\author{Zoran Majki\'c}
\authorrunning{Zoran Majki\'c}
\institute{ISRST, Tallahassee, FL, USA\\
\email{majk.1234@yahoo.com}}

\newtheorem{coro}{Corollary}

\begin{document}
\maketitle              

\begin{abstract}
Neuro-symbolic AI based on $IFOL_B$  is a way to combine neural learning and symbolic reasoning to overcome limitations of purely neural systems (like lack of interpretability and logical structure) with formal logical machinery for self-reference \cite{Majk26}.\\
In this paper we expand the cognitive power of $IFOL_B$ by using the probability computation for the currently unknown sentences, based on Nilsson's probability structure for the $IFOL_B$.  We introduce the global symmetry transformation that preserves the current knowledge database $\K$ and logical deduction, and the local one used for real-time decisions about concrete (sub)problems that involve only a very strict subset of $IFOL_B$ predicates.
The computation of probability density function $KI$ in both cases, based on the Shannon's maximum information entropy, is provided by neural networks of this probabilistic neuro-symbolic AGI.
\end{abstract}

\section{Introduction to IFOL-based Proposal of Self-awareness}

This theory on  robot self-awareness is rooted in the development of Intensional First-Order Logic (IFOL) \cite{Majk22}. We argue that true "Strong AI" or "autoepistemic" robots require a symbolic architecture that allows them to reason about their own internal states as distinct from the external objects they perceive.  This introduction is a short presentation of the my approach to AGI (Strong-AI) for a new generation of intelligent robots, recently published in the papers \cite{Majk23r} and \cite{Majk24a}. Neuro-symbolic AI attempts to integrate neural and symbolic architectures in a manner that addresses strengths and weaknesses of each, in a complementary fashion, in order to support robust strong AI capable of reasoning, learning, and cognitive modeling. In this approach to AGI I considered the Intensional First Order Logic  \cite{Majk22} as a symbolic architecture of modern robots, able to use natural languages to communicate with humans and to reason about their own knowledge with self-reference and abstraction language property.
In what follows we will consider the 4-valued typed $IFOL_B$ based on Belnap's bilattice of truth-values $X = \B_4 = \{f,t,\bot,\top\}$ introduced in \cite{Majk25,Majk26} and denoted by $IFOL_B$.

Intensional entities (or  concepts) are such things as Propositions,
Relations and Properties (PRP). What make them "intensional" is that they
violate the principle of extensionality; the principle that
extensional equivalence implies identity. All (or most) of these
intensional entities have been classified at one time or another as
kinds of \emph{Universals} \cite{Beal93}, in the case of many-valued $IFOL_B$,  $D_I = D_1+D_2+D_3+...$ (with  propositions (L-concepts) $D_1$, and relational concepts $D_n$, $n\geq 2$), and \emph{particulars} $D_{0}$ \cite{Beal79}, which define the PRP domain $\D = D_0+D_I$, with intensional mapping from the set of FOL formulae $\L$ to these intensional concepts
$$I:\L \rightarrow \D$$
If $\phi(\textbf{x})$ is an open formula (virtual predicate with a list (a tuple) of free variables in $\textbf{x} =(x_1,...,x_n)$), then $I(\phi(\textbf{x})) \in D_n$ is a n-ary concept. This concept mapping can be extended  to the homomorphism  $I:\mathcal{A}\mathfrak{B}_{FOL}\rightarrow \mathcal{A}\mathfrak{B}_{int}$, between the FOL syntax algebra to the algebra of concepts. Thus, for a given 4-valued Herbrand base $H$ of $IFOL_B$ and $X =\B_4 = \{f,t,\bot,\top\}$ the set of Belnap's truth-values (false, true, unknown and inconsistent, respectively) and a Herbrand interpretation that must satisfy all built-in predicates as well)
 \begin{equation} \label{eq:Herbrand}
 v:H\rightarrow X
  \end{equation}
 and its unique extension to all sentences $v^*:\L_0 \rightarrow X$,  with $v^* \in \I_{MV}$ of the set of all well-defined Herbrand interpretations (that respects the built-in predicates in Definition 4 in \cite{Majk25}). Note that the set of all well-defined Herbrand interpretations  $\I_{H}$ is only a subset of functions in $X^H$, because we have the built-in predicates as well (from Definition 4 in \cite{Majk25}),
 \begin{equation} \label{eq:Herbrand2}
 \I_{H} \subset X^H  ~~~with ~~~\I_{MV} = \{v^*:\L_0 \rightarrow X | v \in \I_H\}
 \end{equation}
 where $v^*$ is the extension of Herbrand interpretation $v$ to all sentences $\L_0$ of our typed $IFOL_B$.

 Each extensional interpretation $h$ assigns to the intensional elements of $\D$
an appropriate extension: in the case of particulars $u \in  D_{0}$, $h_0(u) \in D_0$, such that for each logic value $a\in X\subset D_0$, $h_0(a) = a$.  Thus, we have the particular's mapping $h_0:D_0\rightarrow D_0$ and more generally  (here $'+'$ is considered as disjoint union) from \cite{Majk25},
 \begin{equation}\label{eq:dueM6}
 h =   \sum_{i\in \mathbb{N}}h_i:\D \longrightarrow   D_0+\sum_{i\geq 1}\mathfrak{Rm}_i
\end{equation}
where $\mathfrak{Rm}_i$ are i-ary relations and
 where $h_1$ assigns to each  L-concept $u \in D_1$  (of a sentence with truth-value $a\in X$), a
relation composed by the single tuple $h_1(u) = \{a\}$ if $a \neq \bot$),  $~\emptyset$ otherwise, and $h_i:D_i\rightarrow \mathfrak{Rm}_i$, for $i\geq 2$, that assigns a m-extension to non-sentence concepts (obtained, for example, an $(i-1)$-ary predicate, so that the last $i$-th column of $\mathfrak{Rm}_i$ is a truth-value of a ground atom of this predicate. In this way, the relation $\mathfrak{Rm}_i$ represents the set of tuples of ground atoms of a given predicate both with their truth-values $a \in X$.

 The  two-steps interpretation of $IFOL_B$ based on two homomorphisms, fixed intensional $I$, and extensionalization mapping $h$, is provided by commutative diagram  in Corollary 3 in \cite{Majk25}, which  defines the MV-interpretation
 $$I^*_{B} = h\circ I$$.

 So we can define the following set of different "possible worlds" for this many-valued $IFOL_B$ \cite{Majk25} based on Herbrand interpretations (\ref{eq:Herbrand}) above:
\begin{equation}\label{eq:SetExtens}
\W = \{I^*_{B} 
~|~ v^* \in \I_{MV}\}~~
\end{equation}
such that for each Herbrand interpretation $v\in \I_{H} \subset X^H$, we have a unique MV-interpretation  $I^*_{B} = h\circ I$, that is, the bijections
\begin{equation}\label{eq:SetExtens2}
is_H:\I_{H} ~\simeq \W ~~~~and~~~~ is_{MV}:\I_{MV}~\simeq \W
\end{equation}
with $I^*_{B} = is_{MV}(v^*) = is_H(v)$ and $v^* = is^{-1}_{MV}\circ is_H(v)$, that is, from the fact that $I$ is fixed intensional interpretation,  each possible world is fundamentally an extensionalization function $h$, and we denote by $\E_{in}$ the set of these extensionalization functions that respects all built-in predicates, with bijection \cite{Majk25}
\begin{equation}\label{eq:SetExtens3}
is_{in}:\W\simeq \E_{in}
\end{equation}
 The extensions of the IFOL concepts change in time (the robot's knowledge), so that we can use for specification of $h$ the time-index  as their ordered representation.

 In reflective languages, reification data is causally connected to the related \index{reification} reified aspect such that a modification to one of them affects the other; by using intensional FOL the robots can formalize also the natural language expressions "I see the blue color" by a predicate "See(\textbf{I},blue color)" where the sense of the ground term "\textbf{I}" (\emph{Self}, me)\footnote{Self in a sense which implies that all our activities are controlled by powerful creatures inside ourselves, who do our thinking and feeling for us.} for a robot is the name of the  main working coordination program which activate all other algorithms (neuro-symbolic AI subprograms) like visual recognition of color of the object in focus. But also the auto-conscience sentence like "I know that I see the blue color" by using abstracting operators "$\lessdot\_\gtrdot$" of intensional FOL, expressed by the predicate "Know(\textbf{I},$\lessdot$ See(\textbf{I}, blue color)$\gtrdot$)", etc...
If $\phi(\textbf{x})$ is a virtual predicate with a list (a tuple) of free variables in $\textbf{x} =(x_1,...,x_n)$  and  $\alpha$ is its subset of \emph{distinct} variables,  then $\lessdot \phi(\textbf{x}) \gtrdot_{\alpha}^{\beta}$ is a term, where $\beta$ is the remaining set of free variables  in $\textbf{x}$. The externally quantifiable variables are the \emph{free} variables not in $\alpha$. When $n =0,~ \lessdot \phi \gtrdot$ is a term which denotes a proposition, for $n \geq 1$ it denotes  a n-ary concept.
 \begin{definition} \label{def:abstrConv} \textsc{Intensional abstraction convention}:

 From the fact that we can use any permutation of the variables in a given virtual predicate,  we introduce the convention that
 \begin{equation}\label{eq:abstrctConv}
 \lessdot \phi(\textbf{x})\gtrdot_{\alpha}^{\beta}~~ is~a~ term~ obtained~ from~ virtual ~ predicate ~~\phi(\textbf{x})
 \end{equation}
 if $\alpha$ is \textsl{not empty}   such that  $\alpha\bigcup\beta$ is  the set of all variables in the list (tuple of variables)  $\textbf{x} = (x_1,...,x_n)$ of the virtual predicate (an open logic formula) $\phi$,  and $\alpha\bigcap\beta = \emptyset$, so that $|\alpha|+|\beta| = |\textbf{x}| = n$.
 Only the variables in $\beta$ (which are the only free variables of this term), can be quantified. If $\beta$ is empty then $\lessdot \phi(\textbf{x})\gtrdot_{\alpha}$ is a \emph{ground term}. If $\phi$ is a sentence and hence both $\alpha$ and $\beta$ are empty, we write simply $\lessdot \phi \gtrdot$ for this ground term.
 \end{definition}
 More about this general definition of abstract terms can be find in \cite{Majk22}. In this paper we will use the most simple cases of ground terms $\lessdot \phi \gtrdot$, where $\phi$ is a sentence.

 By using the intensional mapping $I$, we are able to extend the simple assignment to variables to all (also abstracted) terms:
\begin{definition}
 An assignment $g:\V \rightarrow \D$ forb variables in $\V$ is applied only to free variables in terms and formulae.  Such an assignment $g \in \D^{\V}$ can be recursively uniquely extended into the assignment $g^*:\T \rightarrow \D$, where $\T$ denotes the set of all terms (here $I$ is an intensional interpretation of this FOL, as explained in what follows), by :
\begin{enumerate}
  \item $g^*(t) = g(x) \in \D$ if the term $t$ is a variable $x \in\V$.
  \item $g^*(t) = I(c) \in \D$ if the term $t$ is a constant (nullary functional symbol) $c\in P$.
  \item If $t$ is an abstracted term obtained for an open formula $\phi_i$, $\lessdot \phi_i(\textbf{x}_i) \gtrdot_{\alpha_i}^{\beta_i}$,  then we must restrict the assignment to $g\in \D^{\beta_i}$ and to obtain recursive definition (when also $\phi_i(\textbf{x}_i)$ contains abstracted terms:
\begin{equation} \label{eq:assAbTerm}
  g^*(\lessdot \phi_i(\textbf{x}_i)\gtrdot_{\alpha_i}^{\beta_i}) =_{def}
    \left\{
    \begin{array}{ll}
   I(\phi_i(\textbf{x}_i))~~ \in D_{|\alpha_i|+1}, & \hbox{if  $\beta_i$ is  empty}\\
       I(\phi_i(\textbf{x}_i)[\beta_i
/g(\beta_i)])~~ \in D_{|\alpha_i|+1}, & \hbox{otherwise}
       \end{array}
  \right.
 \end{equation}
where $g(\beta) = g(\{y_1,..,y_m\}) = \{g(y_1),...,g(y_m)\}$ and $[\beta
/g(\beta)]$ is a uniform replacement of each i-th variable in the
set $\beta$ with the i-th constant in the set $g(\beta)$. Notice that $\alpha$ is the set of all free variables in the formula $\phi[\beta /g(\beta)]$.
\item  If $~t = \lessdot \phi_i\gtrdot$ is an abstracted term obtained from a sentence $\phi_i$ then \\$g^*(\lessdot \phi_i\gtrdot) = I(\phi_i) \in D_0$.\footnote{ This case 4 is the particular case 3 when the tuple of variables $\textbf{x}_i$ is empty and hence  $\beta_i$ and $\alpha_i$ are empty sets of variables with $|\alpha_i| = 0$.}
\end{enumerate}
\end{definition}
By introduction  of the abstraction operators with autoepistemic capacities, suported by the $Know$ (meta)predicate, we do not use more a \emph{pure} logical deduction of the standard FOL, but a kind of autoepistemic deduction \cite{Majk04ph,MajkA04} with a proper set of new axioms for $IFOL_B$ provided in \cite{Majk25}.
It has been demonstrated that in such a minimal intensional enrichment of standard (extensional) FOL, we obtain exactly the Montague's definition of the intension (see Proposition 5 in \cite{Majk22}).

We recall that each robot's extensionalitation function $h$ in (\ref{eq:dueM6}) is indexed by the time-instance.
 Clearly, the robots knowledge changes in time and hence determines the extensionalization function $h$ in any given instance of time, based on robots experiences. Thus, as for humans, also the robot's knowledge and logic is a kind of temporal logic, and evolves with time.
  Note that the explicit (conscious) robot's  knowledge in actual world $\hbar$ 
   here is represented by the ground atoms of the $Know$ predicate, for a given assignments  of variables in $\V$, $g:\V\rightarrow \D$,
 \begin{equation} \label{eq:esem2}
  Know(y_1,y_2,\lessdot \psi(\textbf{x})\gtrdot^\beta_\alpha)/g = Know(g^*(y_1),g^*(y_2), g^*(\lessdot \psi(\textbf{x})\gtrdot^\beta_\alpha))
  \end{equation}
  with  $\{y_1,y_2\}\bigcup \beta \bigcup \alpha \subseteq \V$,  such that $g^*(y_1) = in ~ present$ and $g^*(y_2) = \textbf{I}$ (the robot itself),  for the extended assignments $g^*:\T \rightarrow \D$ (from Definition 17 in \cite{Majk22}),  where the set of terms $\T$ of $IFOL_B$ is composed by the set $\V$ of  variables used in the set of predicates of $IFOL_B$, by the set of constants and  \emph{abstracted terms} $\lessdot \psi(\textbf{x})\gtrdot^\beta_\alpha)$, so that in the actual world $\hbar$, the known fact (\ref{eq:esem2}) for robot becomes

  $Know(y_1,y_2,\lessdot \psi(\textbf{x})\gtrdot^\beta_\alpha)/g = Know(in ~ present,\textbf{I}, I(\psi[\beta/g(\beta)]))$\\
  which is true in actual word, that is, from proposition (intensional concept)\\ $u=I(Know(in ~ present,\textbf{I}, I(\psi[\beta/g(\beta)])))\in D_0$, we obtain the truth value \\$\hbar(u) = \hbar(I(Know(in ~ present,\textbf{I}, I(\psi[\beta/g(\beta)])))) = \{t\}$.
   Note that for the assignments $g:\V\rightarrow \D$, such that $g(y_1)= in ~future$ and $g(y_2)$ we consider robot's hypothetical knowledge in future, while in the cases when $g(y_1) = in ~past$ we consider what was robot's knowledge in the past.
  So, the robots current knowledge (ground atoms of predicate $Know$) is directly derived from its experiences (based on its neuro-system processes that robot is using) in an analog way as human brain does:
  \begin{itemize}
    \item As an activation (under robot's attention) of its neuro-system process, as a consequence of some human command to execute some particular job.
    \item As an activation of some process under current attention of robot, which is part of some complex plan of robot's activities connected with its general objectives and services.
  \end{itemize}
\textbf{Remark}:  We consider that only robot's experiences (under robot's attention) are transformed into the ground atoms of the many-valued $Know$ predicate, and the required (by robot) deductions from them (by using general many-valued deduction \cite{Majk26} extended by the three epistemic axioms) are transformed into ground atoms of $Know$ predicate, and hence are saved in robot's temporary memory as a part of robot's \emph{conscience}. Some background process (unconscious for the robot) would successively transform these temporary memory knowledge into permanent robot's knowledge
as it happen for humans. \\By such fixing by humans of robot's unconciseness part with active semantics (which can not be modified by robots and their live experience) of all significant for human robot's concepts and their properties, we will obtain ethically confident and socially safe and non danger robots (controlled by public human  ethical security organizations for the production of robots with general strong-AI capabilities).
   \\$\square$\\
IFOL-based approach is part of a general neuro-symbolic AI paradigm, where researchers aim to blend: \textbf{Neural methods} (e.g., deep learning) for pattern recognition and learning from data, and \textbf{symbolic logic} for structured knowledge representation and reasoning. This broader field (which includes work at IBM Research, MIT, and in academic surveys) seeks to build AI systems that can both \emph{learn from experience} and \emph{reason abstractly}. These ideas  contribute to ongoing discussions in AI about \emph{symbol grounding}, \emph{logical inference}, and \emph{self-referential reasoning} — all important for Strong AI research.

Key aspects of this theory include:
\begin{enumerate}
  \item \textbf{Formalizing the "Self" }(\textbf{I}): \\We propose that for a robot, the ground term "\textbf{I}" (or "me", Self) acts as the name of its main working coordination program. This master program activates all other sub-programs, such as visual recognition or motor control, allowing the robot to represent expressions like $See(\textbf{I}, blue color)$ in its logical framework.\\
  \item \textbf{Neuro-Symbolic Integration}:\\ We advocate for a "dual-process" model inspired by Daniel Kahneman’s System 1 and System 2.\\
  - \textbf{System 1 (Neural)}: Handles fast, unconscious pattern recognition (e.g., deep learning).\\
  - \textbf{System 2 (Symbolic)}: Uses IFOL (Intensional FOL) for slow, explicit, and deliberative thinking, such as planning and logical deduction.\\
  \item \textbf{Intensional Abstraction and Self-Reference}:\\ Using intensional abstraction, a robot can treat its own knowledge (propositions) as "individuals" within its logic. This allows the robot to perform self-referential reasoning—literally thinking about its own thoughts—which we identify as a prerequisite for self-awareness.\\
  \item \textbf{Grounding through Experience}:\\ We argue that robots must "ground" their language concepts by associating them with their own sensory-motor experiences. For example, the sense of the word "blue" isn't just a label, but is grounded in the robot's specific internal neural experience of processing that color.
\end{enumerate}
We addresses \textbf{logical omniscience}—the unrealistic assumption that an agent automatically knows all logical consequences of its beliefs—by using \textbf{intensional abstraction} to shift from "extensional" to "intensional" reasoning. This work is detailed in the book, Intensional First-Order Logic \cite{Majk22}: From AI to New SQL Big Data, which outlines these principles as a path toward a new generation of Strong-AI robots.
In standard modal logics (like S5), knowledge is closed under logical implication, meaning if an agent knows $A$, it must instantly know all $B$ where $A\rightarrow B$. We solve this by re-engineering how propositions are represented: 
\begin{itemize}
  \item \textbf{Reification of Propositions}: Through intensional abstraction, predicates and sentences are "reified"—treated as individual objects (intensional entities) within the same domain as physical objects.
  \item \textbf{Decoupling Truth from Meaning}: In this framework, two propositions can be extensionally equivalent (true in all the same possible worlds) but intensionally distinct. For example, a robot might know "Triangle A is equilateral" without yet knowing "Triangle A is equiangular," because those two concepts are different intensional "individuals" that require a specific computational inference step to link.
\end{itemize}
This approach allows for Strong-AI robots that can reason about their own knowledge (autoepistemic reasoning) as a finite, step-by-step process, mirroring human cognitive limitations rather than possessing infinite mathematical foresight.

In our framework, autoepistemic reasoning is the ability of a robot to reason about its own state of knowledge and belief as if they were objects in the world. Unlike standard AI, which might "know" a fact without "knowing that it knows," our Strong-AI Autoepistemic Robots use specialized logical structures to achieve formal self-reflection.
\begin{enumerate}
  \item \textbf{The "Temporal Know" Predicate}: \\We  implement autoepistemic capabilities by introducing a specific temporal "\emph{Know}" predicate. This allows the robot to handle complex internal axioms:\\
      -  \textbf{Reflexive Axiom}: The robot understands that if it knows something, that thing must be part of its internal truth model.\\
      - \textbf{Positive Introspection}: The robot can derive that "I know that I know $A$".\\
      - \textbf{Distributive Axiom}: The robot can apply its knowledge across logical implications (e.g., if it knows $A$ and knows $A\Rightarrow B$, it can reason its way to $B$). \\
  \item \textbf{Grounding through Neuronal Experience}:\\
A central part of our theory is that a robot's "internal" language must be grounded in its own hardware experiences:\\
- \textbf{Mining the "Sense"}: The robot associates high-level logical concepts with the specific firing patterns of its neural architectures.\\
- \textbf{Subjective Knowledge}: Because this grounding is unique to the robot’s own sensors and processors, its autoepistemic reasoning is truly "self-centered"—it reasons based on how it specifically perceives the world.
\\
  \item \textbf{Handling Inconsistency}:\\
Traditional logic often breaks down when faced with a contradiction (the "explosion principle"). Our Autoepistemic Logic \cite{Majk25,Majk26} is designed to be many-valued, based on Belnap's bilattice, meaning the robot can:\\
- Reason with incomplete or inconsistent information without crashing.\\
- Revise its beliefs when new "experiences" from its neural layer contradict its previous symbolic "knowledge".\\
- Support the Knowledge Assumption Closure.\\
\item \textbf{Language as a Tool for Self-Reference}:\\
We argue that natural language is inherently "many-sorted" and intensional. By giving robots an IFOL architecture, they don't just process strings of text; they use language as a symbolic coordination tool to label and manipulate their own internal programs and knowledge states
\end{enumerate}
In this research, bridging the gap between neural networks (sub-symbolic) and  symbolic logic  is not just about making them work side-by-side; it is about creating a formal translation layer where the robot's "internal feelings" (neuron firings) become "meaningful concepts" (logical terms).
We achieve this via a Dual-System Architecture grounded in Intensional First-Order Logic:
\begin{enumerate}
  \item \textbf{System 1: The "Neuronal Grounding" Layer};\\
System 1 consists of deep learning and neural networks. For us, this layer is responsible for pattern recognition and sensory-motor coordination.\\
- \textbf{The Output}: Instead of just outputting a label like "Blue," the neural network produces a specific internal state (a vector or firing pattern).\\
- \textbf{The Problem}: On its own, the neural network doesn't "understand" the concept; it just reacts. This is where the gap exists.
\\
  \item \textbf{The Bridge: Intensional Abstraction}:\\
This is our "secret sauce." We use intensional abstraction to turn the complex activity of the neural network into a "Logic Object."\\
- \textbf{Mapping}: Each distinct neural pattern is mapped to a specific intensional entity in the IFOL.\\
- \textbf{Meaning vs. Reference}: The "meaning" (intension) of a word like heavy is the specific neural experience of the robot's motors straining. The "reference" (extension) is the actual physical object being lifted.\\
- \textbf{Self-Labeling}: The robot uses autoepistemic reasoning to say, "I am currently experiencing neural state Y, which I have mapped to the concept 'Heavy'."
\\
  \item \textbf{System 2: The "Autoepistemic Logic" Layer}:\\
Once the neural patterns are abstracted into logical symbols, System 2 takes over. This is the deliberative, slow-thinking part of the brain.\\
- \textbf{High-Level Planning}: Because these symbols are now part of a formal logic $IFOL_B$, the robot can use them, both with probabilistic computations, in complex "If-Then" scenarios that neural networks struggle with.\\
- \textbf{Predefined set of axioms and prohibitions}: Any kind of reasoning results and plans must satisfy these axioms before the executions of actions derived from such plans. In this way the robots can not become danger for the humans and the environment in which they are active.\\
- \textbf{Feedback Loop}: System 2 can "query" System 1. For example, the symbolic layer might ask, "Check the visual sensors again; does that object match the 'Bird' intension?" This forces the neural network to re-process data based on logical needs.
\\
\item \textbf{Solving the "Black Box" Problem}:\\
One of the most interesting aspects of this bridge is Explainability.\\
- In standard neural networks, we don't know why a robot chose an action.\\
- In this framework, because every neural state is linked to an intensional logic term, the robot can provide a symbolic trace of its reasoning: "I performed Action A because my neural sensors triggered the 'Danger' intension, which my logic defines as a state to be avoided".
\end{enumerate}
\textbf{Summary Table:} The neuro-symbolic's Bridge\\
\begin{tabular}{|c|c|c|c|}
  \hline
    & \textbf{Sistem 1 (Neural)} & \textbf{The Bridge (Intensional)} & \textbf{System 2 (Symbolic)} \\
    \hline
  \textbf{Function} & 	Fast, reactive sensing & Abstraction $\&$ Grounding & Slow, logical planning \\
  \hline
  \textbf{Data Type} & Vectors / Tensors & Intensional Entities & Logic Formulas \\
  \hline
  \textbf{Role} &Perceives the world & Maps neurons to symbols & Reasons about perceptions \\
  \hline
  \textbf{Self-awareness} & Unconscious & The "\textbf{I}" links the two & Conscious autoepistemic \\
  & & & thought \\
  \hline
\end{tabular}
\\\\
In this paper we introduce more expressive 4-valued $IFOL_B$, based on Belnap's bilattice, by probabilistic features as well, in order to make neuro-symbolic learning and many-valued deductions (as humans) also in the presence of unknown and inconsistent (contradictory) sentences \cite{Majk25,Majk26}, because is such cases the standard 2-valued FOL is unable to work well (deduces absolutely anything). In this paper we will extend this truth-many-valued neuro-symbolic framework also to the probabilistic learning inside this $IFOL_B$ useful also for the simulation, planing and verification of different strategies in order that the AGI-robot's would be able to achieve their objectives and adaptive goal-directed behaviour under varying circumstances.
 With this extension, the robots will be able to reason both with algebraic many-valued method (as epistemic logic) and also to manage uncertainty by \emph{probabilistic logic} and symbol-guided \emph{common-sense} obtained by deep neural learning in the same formal $IFOL_B$ cognitive model. Such AGI-robots cognitive model is  a general highly unified neuro-symbolic model.

 It is passed twenty years from my  proposed a fairly ambitious revision in that time of the earlier probabilistic logic programming (PLP) frameworks developed by V. S. Subrahmanian and collaborators \cite{Majk07,Majk11}, where I tried to address exactly the kinds of their semantic weaknesses: problematic fixpoint semantics, interval ambiguity, mismatch between syntax and possible-world semantics, temporal inconsistency, and
many-valued logical complications. In particular, I criticized implicit temporal semantics, unclear Herbrand interpretation structure, interval annotations without explicit world semantics, overly syntactic fixpoint constructions. I argued that time should be explicitly represented, probabilistic worlds should remain classical possible worlds, probability should be defined over sets of temporal Herbrand interpretations. This is actually a fairly deep semantic correction which effectively reduces temporal probabilistic logic to classical logic over expanded temporal predicates with classical model theory,    accepted by the next developments of this revision, especially in today's main-stream of distributive semantics for PLP (modern PLP research mostly evolved toward distribution semantics, weighted logic, probabilistic graphical models, and differentiable probabilistic inference).

Earlier systems often treated time as metadata, annotation, external modal structure, while in my revision I instead internalized time directly into   Herbrand models, predicates and possible worlds, by restoring semantic clarity:
each possible world becomes an ordinary temporal interpretation, and
probability distributions are defined over classical logical models. Rather than relying primarily on
interval probability propagation and many-valued fixpoint operators, this revision has been philosophically important: probability should remain measure-theoretic and logic should remain 2-valued at the world level, which aligns more closely with
standard probability theory, modal semantics, Kripke semantics, and probabilistic model theory. I explicitly criticized the idea that probabilistic truth should be treated as ordinary many-valued logic, by adopting Nils Nilsson probability distributions over possible worlds, instead of embedding probabilities directly as many-valued truth values: The probabilities are not truth values themselves,
they are properties of propositions across worlds.  This direction is actually philosophically closer to Nilsson, Sato, and important researchers: Fabrizio Riguzzi, Luc De Raedt, Angelika Kimmig and Terrance Swift, with key systems ProbLog, PRISM and LPADs.   A major semantic consolidation paper was by Riguzzi $\&$ Swift on well-defined distribution semantics \cite{RiSw11}.

My revision reflected a deeper foundational issue: Is probability a generalized truth value, or a measure over classical worlds? This is a profound distinction, with the answer: worlds remain classical and probability is meta-level structure over worlds. That position is philosophically closer to Kolmogorov probability,
modal realism and stochastic semantics,
than, fir example, to fuzzy logic. This resembles modal logic, intensional semantics \cite{Majk11a,Majk11TS} with possible-world semantics and Montague-style semantics. Here I will extend it \cite{Majk22} also to 4-valued (Belnap's bilattice) logic for AGI robots.
\section{Nilsson's Structures and Reasoning about Probabilities\label{section:problogic0}}
The probability theory is a well-studied branch of mathematics, in
order to carry out formal reasoning about probability. Thus, it is
important to have a logic, both for computation of probabilities and
for reasoning about probabilities, with a well-defined syntax and
semantics. Both current approaches, based on Nilsson's probability
structures/logics, and on linear inequalities in order to reason
about probabilities, have some weak points (Section 2.2 in \cite{Majk22}).

We will show  that the logic for
reasoning \emph{about} probabilities can be naturally embedded into
a 4-valued intensional FOL with intensional abstraction, by avoiding
current ad-hoc system composed of \emph{two different} 2-valued
logics: one for the classical propositional logic at lower-level,
and a new one at higher-level for  probabilistic constraints with
probabilistic variables.

The main motivation for an introduction of the intensionality in the
probabilistic-theory of the propositional logic is based on the
desire to have the \emph{full} logical embedding  of the probability
into the First-Order Logic (FOL), with a clear difference from the
classic concept of truth of the logic formulae and the concept of
their probabilities. In this way we are able to replace the ad-hoc
syntax and semantics, used in current practice for Probabilistic
Logic Programs \cite{DeDe04,UZVS06,Majk07} and probabilistic deduction \cite{Majk09i}, by the standard syntax and semantics used for the FOL where the probabilistic-theory properties are expressed simply by the particular constraints on their interpretations and models.

In this section we will consider the probabilistic semantics for the
propositional logic only (it can be easily extended to predicate logics
 as well) \cite{Nils86,FaHM90} with a fixed finite set $P = \{p_1,...,p_n\}$ of primitive propositions, which can be thought of as corresponding to basic probabilistic events. The set $\L(P)$ of the propositional formulae is the closure of $P$ under the Boolean operations for conjunction and negation,
$\wedge$ and $\neg$, that is, it is the set of all formulae of the
\emph{propositional} logic $(P,\{\wedge, \neg\})$.

In order to give the probabilistic semantics to such formulae, we
first need to review briefly the probability theory (see, for example, \cite{Fell57,Halm50}):
\begin{definition} \label{Def:p-space}
A probability space $(S,\mathcal{X}, \mu)$ consists of a set $S$,
called the sample space, a $\sigma$-algebra $\mathcal{X}$ of subsets
of $S$ (i.e., a set of subsets of $S$ containing $S$ and closed
under complementation and countable union, but not necessarily
consisting of all subsets of $S$) whose elements are called
measurable sets, and a probability measure $\mu:\mathcal{X}
\rightarrow [0,1]$ where  $[0,1]$ is the closed
interval of reals from 0 to 1. This mapping satisfies
Kolmogorov axioms \cite{Kolm86}: \index{Kolmogorov axioms}

A.1 $~\mu(Y) \geq 0$ for all  $Y \in \mathcal{X}$.

A.2 $~\mu(S) = 1$.

A.3 $~\mu(\bigcup_{i\geq 1} Y_i) = \sum_{i\geq 1} \mu(Y_i)$, \\$~~$if
$~Y_i$'s are nonempty pairwise disjoint members of $~\mathcal{X}$. We define a probability density function, $KI= \mu\circ in$, where $in:S \hookrightarrow \P(S)$ is an inclusion such that $in(s) = \{s\}$.
\end{definition}
The  $\mu(\{s\}) = KI(s)$ is the value of probability in a single point of
space $s$.

 The property A.3 is called \emph{countable additivity} for
the probabilities in a space $S$. In the case when $~\mathcal{X}$ is
finite set, then we can simplify property A.3 above to

A.3' $~\mu(Z \bigcup Y) = \mu(Z) + \mu(Y)$, \\ if $Z$ and $Y$ are
disjoint members of $~\mathcal{X}$, or, equivalently, to the following axiom:

A.3'' $~\mu(Z ) = \mu(Z\bigcap Y) + \mu(Z \bigcap \overline{Y})$,\\
where $\overline{Y}$ is the compliment of $Y$ in $S$, so that $\mu(
\overline{Y}) = 1 - \mu(Y)$.

In what follows we will consider only finite sample space $S$, so
that $~\mathcal{X} = \P(S)$. Thus, in our case of a finite set $S$ we obtain, form A.1 and A.2, that for any $Y \in \P(S)$,
 \begin{equation}\label{eq:muKI}
 ~\mu(Y) = \sum_{s \in Y}\mu(\{s\}) = \sum_{s \in Y} KI(s)
\end{equation}
 Based on the work of Nilsson in \cite{Nils86} we can define for a given propositional logic with a finite set of primitive propositions $P$ the sample space $S = \textbf{2}^{P}$, where
 \begin{equation}\label{eq:muKIbb}
 \textbf{2} = \{f,t\} \subset X = \B_4 = \{f,t,\bot,\top \}
 \end{equation}
  is the set of truth values of standard 2-valued logic,
 so that the probability space is equal to the Nilsson's structure $N =
(\textbf{2}^{P}, \P(\textbf{2}^{P}), \mu)$.

In his work (page 2, line 4-6 in \cite{Nils86}) Nilsson considered a
Probabilistic Logic "in which the truth values of sentences can
range between $0$ and $1$. The truth value of a sentence in
\emph{probabilistic logic} is taken to be the probability of that
sentence in ordinary first-order logic." That is, he considered this
logic as a kind of a \emph{many-valued} (shown in Section 2.2.1 in \cite{Majk22}), but not a compositional
truth-valued, logic. But in his paper he did not defined the formal
syntax and semantics for such a probabilistic logic, but only the
matrix equations where the probability of a sentence $\phi \in
\L(P)$ is the sum of the probabilities of the sets of possible
worlds (equal to the set $S = \textbf{2}^{P}$) in which that
sentence is \emph{true}. So that he assigns \emph{two} different
logic values to each sentence $\phi$: one is its probability value
and another is a classic 2-valued truth value in a given possible
world $v \in \W = S = \textbf{2}^{P}$.

The \emph{logic} inadequacy of this seminal work \cite{Nils86} of
Nilsson is also considered in \cite{FaHa89}, by extending this
Nilsson's structure into a more general  \emph{probability structure} $M =
(\textbf{2}^{P}, \P(\textbf{2}^{P}), \mu, \pi)$, where $\pi$
associates with each $s \in S = \textbf{2}^{P}$ the truth
assignment $\pi(s):P \rightarrow \textbf{2}$. However, in our case when $S = \textbf{2}^{P}$, $\pi$ is just an identity, so not necessary, and we consider each $s$ as a truth valuation $s=v:P\rightarrow \{f,t\}$ which can be uniquely
extended to the truth assignment $v^*$ to all formulae in $\L(P)$, by
taking the usual rules of propositional logic (the unique
homomorphic extension to all formulae), and we can associate to each
propositional formula $\phi \in \L(P)$ the set $\phi^M$
consisting of all states $s \in S$ where the sentence $\phi$ is true, so that
\begin{equation}\label{eq:muKIaa}
~\|\phi\| = \{v \in \W = \textbf{2}^{P} ~|~ v^*(\phi) =t\}~~
\end{equation}
But, differently from Nilsson, in \cite{FaHa89} the authors did not define a many-valued
propositional logic, but a kind of 2-valued logic based on
probabilistic constraints. They denoted by $w_N(\phi)$ the
\emph{weight} or \emph{probability} of $\phi$ in Nilsson structure
$N$, correspondent to the value $\mu(\|\phi\|)$, so that the basic
probabilistic 2-valued constraint can be defined by expressions $c_1
\leq w_N(\phi)$ and $w_N(\phi) \leq c_2$ for given constants
$c_1,c_2 \in [0,1]$. They expected  their logic to be used for reasoning \emph{about} probabilities. But, again,  they did not defined a
unique logic, but \emph{two different} logics: one for the classical
propositional logic $\L(P)$, and a new one for 2-valued
probabilistic constraints obtained from the basic probabilistic
formulae above and Boolean operators $\wedge$ and $\neg$. They did
not consider the introduced symbol $w_N$ as a formal functional
symbol for a mapping $w_N:\L(P) \rightarrow [0,1]$, such that for
any propositional formula $\phi \in \L(P)$, with $\W = S = \textbf{2}^P$, the \emph{probability to be true} of sentence $\phi$ is

$w_N(\phi) = \mu(\|\phi\|) = \sum_{v\in \|\phi\|} KI(v)$. \\
Instead of this intuitive meaning for $w_N$ they considered each expression $w_N(\phi)$ as a particular probabilistic term (more precisely, as a structured
probabilistic \emph{variable} over the domain of values in $[0,1]$).

 It seams that such a dichotomy and difficulty to have
\emph{a unique} 2-valued probabilistic logic, both for an original
propositional formulae in $\L(P)$ and for the probabilistic
constraints, is based on the fact that if we consider $w_N$ as a
function with one argument then it has to be formally represented as
a binary predicate $w_N(\phi,a)$ (for the graph of this function)
where the first argument is a formula and the second is its
resulting probability value. Consequently, a constraint "the
probability of $\phi$ to be true is less or equal to $c$", has to be formally
expressed by the logic formula $w_N(\phi,a) \wedge \leq (a,c)$ (here
we use a symbol $\leq$ as a built-in rigid binary predicate where
$\leq (a,c)$ is equivalent to $a \leq c$), which is a
\emph{second-order} syntax because $\phi$ is a logic \emph{formula}
in such a unified logic language. That is, the problem of obtaining
the unique logical framework for probabilistic logic comes out with
the necessity of a \emph{reification} feature of this logic
language,
analogously to the case of the \index{reification}
intensional semantics for RDF data structures \cite{Majk08ird,Majk22}.

Consequently, we need a logic \cite{Majk11a} which is able to deal directly with
reification of logic formulae,  that transforms a
propositional formulae $\phi \in \L(P)$ into an abstracted
\emph{term}, denoted by $\lessdot\phi \gtrdot$. By this approach the expression $w_N(\lessdot\phi \gtrdot,a) \wedge \leq (a,c)$ remains
to be an ordinary first-order formula. In fact, if $\lessdot\phi
\gtrdot$  is translated into non-sentence "that $\phi$", then the
first-order formula above corresponds to the sentence "the
probability \verb"that" $\phi$ is true is less than or equal to $c$".

Such approach has been used \cite{Majk05E,Majk22} for the reduction of temporal probabilistic databases into constraint logic programs and to apply the interval PSAT in order to find the models of such interval-based probabilistic \emph{programs}. In our case, for selfindependent neuro-symbolic AGI robots, we do not need to write the probabilistic \emph{programs} for them, but only to provide a general self-reasoning about the probabilities of their 4-valued sentences. So,
in next Section we will show how the probabilistic reasoning can be embedded in the 4-valued $IFOL_B$ based on Belnap's bilattice of truth-values in $X = \B_4 = \{f,t,\bot,\top\}$.
\section{Neuro-symbolic AGI Robots  Reasoning about Probabilities of Uncertain Events\label{section:problogic}}
There are numerous proposals for probabilistic logics. Very roughly, they can be categorized into two different classes: those logics that attempt to make a probabilistic extension to logical entailment, such as Markov logic networks, and those that attempt to address the problems of uncertainty and lack of evidence (evidentiary logics).

 For AGI  robots we considered the  $IFOL_B$ with 4-valued Belnap's bilattice of truth-values in $X = \B_4 = \{f,t,\bot,\top\}$ with knowledge ordering as well \cite{Majk25}, where the value "unknown" is the bottom value, the sentences with this value are indeed unknown facts, that is, the missed knowledge in the AGI robots.

  Thus, these unknown facts are not part of the robot's knowledge database, and by learning through input and experiences, the robot's knowledge would be naturally expanded over time.
  Consequently, this phenomena has been represented by the  Closed Knowledge Assumption and Logic Inference \cite{Majk26}, here for unknown facts, because of lack of evidence inside non-probabilistic $IFOL_B$ and its autoepistemic deductive system enable to establish some of the \emph{known} logic states in $\{t,f,\top\} \subset X$ of the facts (to be true, false or inconsistent) the robot has to assign to such facts the "absolute uncertainty", that is the truth-value $\bot$ (\emph{unknown} logic value). So, the minimization of this uncertainty can be obtained by introducing the probabilistic computation (by using Nilsson's structures with Kolmogorov's axioms) for these unknown facts, to obtain \emph{what is the probability} of a currently unknown fact to be  true, false or inconsistent.

We recall that the  extensionalization functions $h \in \E_{in}$ in $IFOL_B$, are given in the disjoint union from (\ref{eq:dueM6}),
 $$h =   \sum_{i\in \mathbb{N}}h_i:\D \longrightarrow   D_0+\sum_{i\geq 1}\mathfrak{Rm}_i$$
 Thus,
the intensions can be seen as \emph{names} of abstract or concrete
entities, while the extensions correspond to various rules that
these entities play in different worlds
 we use  the bijective mapping (\ref{eq:SetExtens3}), $is_{in}:\W \rightarrow \E_{in}$, where $\E_{in}$ is the set of all extensionalization functions that respect all  built-in predicates, and the "set of possible worlds" $\W = \{h\circ I | h \in \E_{in}\}$.
\\
So,   from the commutative truth-diagram for the set $\L_0$ of all sentences of this logic $\L_{in}$, provided by Theorem 1 in \cite{Majk25},  we obtain that $I^*_{B} = h\circ I$,
so that from Definition 15 in \cite{Majk25},  given an assignment $g:\V\rightarrow \D$, for any sentence $\phi/g\in \L_0$,
 \begin{equation}\label{eq:mvR}
 (h\circ I)(\phi/g) =I^*_{B}(\phi/g) =
 \left\{
    \begin{array}{ll}
 \{v^*(\phi/g)\} \in  \mathfrak{Rm}, & \hbox{if $~~v^*(\phi/g) \neq \bot$}\\
      \emptyset, & \hbox{otherwise}
    \end{array}
  \right.
 \end{equation}
 Consequently, we are able to represent the whole \emph{AGI robot's knowledge Database} of its n-ary concepts, $n \geq 1$, in each fixed instance of time (it is an analog to traditional relational Database with standard 2-valued FOL) as follows:
\begin{definition}\label{Def:knowledge} \textsc{AGI Robot's Current Atomic Knowledge Database}\footnote{This definition is different from the definition in \cite{Majk25}, in order to be able also to derive the current Herbrand model of robot's knowledge $v:H \rightarrow X$, where $H$ is the Herbrand base for all  predicates in $P$, Note that the autoepistemic predicate $Know$ is a meta-èredicate, so that $Know \notin P$.}:\\
For a given instance of time, the current AGI robot knowledge is defined by its current MV-interpretation $\textbf{I}^*_{B}$ (that represents robot's current world $w \in \W$) defines  the extension of robot's knowledge Database $\K$ by:
\begin{equation} \label{eq:two-know}
~~ \K ~= ~ \{R_{p_i^k} = \textbf{I}^*_{B}(p_i^k(x_1,...,x_k) ~|~p_i^k \in P, ~for~ x_i \in \V,  1\leq i\leq k\}
\end{equation}
where $\textbf{I}^*_{B}$ is the current  MV-interpretation (a function from $\L$ to $\mathfrak{Rm}$) and $\V$ the set of variables and $R_{p_i^k}$ is the $(k+1)$-ary relation obtained for the k-ary predicate $p_i^k$.
\end{definition}
 Consequently, the atomic knowledge database is the subset of current metaknowledge of a robot is the set of relations, that is,
\begin{equation} \label{eq:two-metaknow}
\K \subset~~  ~ \{R ~|~R  \in Im(\textbf{I}^*_{B}), ~for~ ar(R) \geq 2\}
\end{equation}
where for each tuple of ground terms, $\textbf{d} = (t_1,...,t_k,a) \in R_{p_i^k} \in \K$, of relation with arity $k+1\geq 2$, the current truth-value of this, for robot known, fact is equal to last value of this tuple, $a =\pi_{k+1}(\textbf{d}) \in \{f, \top, t\}$ with truth-ordering $f < \top < t$. So, for any ground instance of each robot's (real or virtual) predicate (and corresponding intensional concept), robot is able to know the level of truth in a given instance of time. Thus, given current atomic knowledge database $\K$, we are able to derive from it the current Herbrand model $v:H \rightarrow X$ and its extension $v^*$ to all sentences, for robot's knowledge by, for any ground atom $p_i^k(t_1,...,t_k)\in H$, with the ground terms $t_i$, for $1\leq i\leq k$,
\begin{equation} \label{eq:two-metaknowH}
 v(p_i^k(t_1,...,t_k)) =
 \left\{
    \begin{array}{ll}
 a\in \{f, \top, t\}, & \hbox{if $~~(t_1,...,t_k,a) \in R_{p_i^k} \in \K$}\\
      \bot, & \hbox{otherwise}
    \end{array}
  \right.
\end{equation}
Thus, from current atomic knowledge database $\K$, we are able to derive current Herbrand model $v$, its extension to all sentences $v^*$ and hence the current possible world (MV-model in Definition \ref{Def:knowledge}) $\textbf{I}^*_B = h\circ I \in \W$.

 It was demonstrated that the knowledge database $\K$ satisfies the following assumption:
 \begin{definition}\label{Def:CKA} \textsc{Closed Knowledge Assumption(CKA)}:\\
The CKA for the  many-sorted Intensional FOL based on Belnap's 4-valued bilattice of truth-values is defined, for every Herbrand interpretation $v:H \rightarrow X$ and assignment $g:\V \rightarrow \D$, as follows:\\
For each k-ary virtual predicate $\phi(x_1,...,x_k)$, $k \geq 1$,

$v^*(\phi(x_1,...,x_k)/g) \neq \bot~~$ iff $~~(g(x_1),...,g(x_k))\in \pi_{-i}(I_B^*(\phi(x_1,...,x_k)))$.
\end{definition}
The \emph{intensional abstract terms} are "that-clauses" used for \emph{reification} \index{reification} features \cite{Majk09P} of $IFOL_B$ so that, for a logic formula $\phi(\textbf{x})$ and assignment $g\in \D^\V$ of variables in $\V$,  "that $\phi$", from Definition \ref{def:abstrConv} is  denoted by the ground abstracted term $\lessdot \phi(\textbf{x})[\beta/g(\beta)] \gtrdot_\alpha$ (if $\phi$ is a sentence then both $\alpha$ and $\beta$ are empty).  Hence, the sentence "the probability that a sentence $\phi$ has a truth-value $a$ is less then or equal to $c_1$" can be expressed by the first-order logic ground atom $w_N(\lessdot \phi \gtrdot,a,c) \wedge \leq(c,c_1)$, where  $\leq$ is the binary built-in predicate with standard denotation $c\leq c_1$, while "the probability that $\phi$ has a truth-value $a$ is equal to $c$" is denoted by the  ground atom $w_N(\lessdot \phi \gtrdot,a,c)$ with the special  ternary  predicate  $w_N$ which first argument is from an sentence abstracted term, the second argument is for the truth-value of this sentence and third argument is the probability that it is so.
 So, we introduce the following  built-in concepts (for Definition 9 in \cite{Majk25}) of typed (many-sorted) $IFOL_B$:
 \begin{enumerate}
   \item "\textbf{truth values}:$~s_X$", introduced in \cite{Majk25}, that is a concepts in $D_2$ such that for $X \subset D_0$, its extension is $h_2(\textbf{truth values}:~s_X) = \{(a,t)~|~a,t \in X\} $.  We will use the special typed variable $x_X$ for the set of truth values in $X = \B_4 = \{f,t,\bot,\top\}$, so that for each assignment $g$, $g(x_X) \in X$.
   \item "\textbf{probability}:$~s_p$", that is a concepts in $D_2$ such that  its extension is $h_2(\textbf{probability}:~s_p) = [0,1]\times \{t\}$, where $[0,1]$ is interval of positive reals used for the probabilities and $t\in X$ the true logic value. We will use the special typed variable $x_p$ for the probabilities, so that for each assignment $g$, $g(x_p) \in [0,1]$.
  \item for each finite ordered calendar interval of time $[\textbf{t}_I,\textbf{t}_F]_\tau$  with initial $\textbf{t}_I$ time-instance and final $\textbf{t}_F$ time-instance, of sorts $s_\tau$) with a given granularity $\tau \in \{year, year:month, year:month:day, year:month:day:our,...\}$,
      we can have a particular calendar-concept in $D_2$,  "\textbf{calendar}:~$s_\tau$",  such that its finite extension is       $h_2(\textbf{calendar}:~s_\tau) = [\textbf{t}_I,\textbf{t}_F]_\tau \times \{t\}$  with $t\in X$ the true logic value. We will use the special typed variables $x_\tau$ for the time-instances of these calendars, so that for each assignment $g$, $\textbf{t} = g(x_\tau) \in [\textbf{t}_I,\textbf{t}_F]_\tau$.\\
      Each predicate $p_i \in P$ that has exactly one attribute of sort  $s_\tau$ is called an "atomic event" predicate.
 \end{enumerate}
 Thus, in this version of $IFOL_B$ we will have the following three meta-predicates (two-valued predicates that represent's the semantically very specific knowledge about properties of standard domain-based predicates in $P$ (the predicates in $P$ have no the typed variable $x_s$ of sort 'nested sentence', provided in Definition of static sorts in \cite{Majk24ar}):
 \begin{itemize}
   \item 4-valued Knowledge predicate $Know$ used for autoepistemic deduction \cite{Majk26}.
   \item 2-valued (true or false) Probabilistic predicate $w_N$  with the first argument has the sort of 'nested sentence' and with free-variable atom $w_N(\lessdot \psi(\textbf{x})\gtrdot^\beta, x_X, x_p)$ where the formula $\psi$ is composed by predicates in $P$. If $\psi$ is composed also by "atomic event" predicates, then $w_N(\lessdot \psi(\textbf{x})\gtrdot^\beta, x_X, x_p)$ is called Event probabilistic predicate".
  \end{itemize}
 so that $P \bigcap \{Know, w_N\}$ is empty set. This separation of standard and meta-predicates is based on the fact that the truth-value  of meta-predicate ground atoms depends on the truth-values of the \emph{another sentence} used in the abstract term of the typed variable $x_s$.
 In effect, from Definition 14  in \cite{Majk26}, we have that the computation of the truth-value of the 4-valued $Know$ ground atoms is done as this atom is a particular logic sentence (formula) and not ground atom of Herbrand base:
 \begin{multline} \label{eq:esem3s}
 v^*(Know(t_1,t_2,\lessdot \psi(\textbf{x})\gtrdot^\beta)/g ) = v^*(\psi(\textbf{x})/g)\in X, ~~~~and~ for ~~I_B^* = is_{MV}(v^*)\\
  I_B^*(Know(t_1,t_2,\lessdot \psi(\textbf{x})\gtrdot^\beta)/g ) = I_B^*(\psi(\textbf{x})/g)~~~~~~~~~~~~~~~~~~~~~~~~~~~~~~~~~~~~~~~~~~~~~~~~~~~~~~~
 \end{multline}
 We have shown by Lemma 1 in \cite{Majk26} that, during time-evolution for time-instances $\textbf{t}_i \leq \textbf{t}_{i+1}$ of robot's knowledge, it is always satisfied that
 \begin{equation} \label{eq:knowledge}
 v^*_i\preccurlyeq_k v^*_{i+1}
\end{equation}
 that is, we have a monotonic increments of knowledge, initially most (non built-in predicate) ground atoms are unknown, and that during learning processes in future, the (non built-in predicate) ground atoms can change their truth-value in this knowledge-monotonic way:
 \\
1. $\bot \mapsto f$  of $\bot \mapsto t$;\\
2. $f \mapsto \top$;\\
3. $t \mapsto \top$;\\
4. $\top \mapsto \top$.\\
Thus, for the Herbrand base (from Definition 4 in \cite{Majk26})

$~~H = \{p_i^k(t_1,..,t_k)~|~p_i^k \in P$ and $t_1,...,t_k$ are ground terms $\}$\\ and the current Herbrand interpretation $\textbf{v}:H\rightarrow X$,  \emph{for  still unknown atoms} $A\in H$ with $\textbf{v}(A) = \bot$ (note that such atoms can note be of built-in predicates that have invariant truth-value true or false) we can use the probabilistic methods to establish with which probability they can have any truth-value in $X$, and to use this knowledge for another logical deductions.

 Consequently, in order that AGI robots be able to reason about probabilities of the  sentences  using the syntax of the FOL
with the set of predicate symbols $P$ of current robot's knowledge (robot would be able to increment this set with new predicates as well, based on its extended in time knowledge about external world) we have the sample space of Definition \ref{Def:p-space}, from (\ref{eq:Herbrand2}),  $S \subseteq \I_H \subset X^{H}$ where $H$ is Herbrand base of this logic $IFOL_B$  with Belnap's bilattice of truth-values in $X$.
 And we impose the following constraints for the probability density $KI = \mu\circ in:S \rightarrow [0,1]$:
\begin{definition}\label{def:KIconstraint} \textsc{Probability Density Constraints}:\\
For any current Herbrand interpretation $\textbf{v}:H\rightarrow X$, the robot's current knowledge database $\K$ of $IFOL_B$ (from Definition \ref{Def:knowledge} )  must \emph{remain invariant} by introducing the probability structure $(S,\P(S),\mu)$, in Definition \ref{Def:p-space}.\\
That is, for every atom $A \in H$ such that $u = \textbf{v}(A) \neq \bot$, the probability that $A$ has the truth-value $u$ must be equal to 1, i.e.,
\begin{equation}\label{eq:KIconstraint}
\forall A\in H.(~if ~~u = \textbf{v}(A) \neq \bot ~~~~then ~~\sum_{v\in S, v(A) = u} KI(v) = 1)
\end{equation}
\end{definition}
Thus, the action of introducing the probability computation in $IFOL_B$ will have the effects only to diminish the uncertainty of unknown facts, for which we can now compute \emph{the probability} to have any truth-value in $X$. \\
\textbf{Remark}: This is a particular case of Noether's conservation theorem applied to robot's cognitive system: the (global) \emph{symmetry transformation} of robot's cognitive power by action of introduction of the probability structure $(S,\P(S),\mu)$ preserves its knowledge database $\K$. \\
This symmetry is \emph{global} because is valid for all current Herbrand base (that is to all predicates in $P$ of robot's knowledge base: so computationally it has relevant cost to be applied in real-time robot's processes. However, we are able to consider less computationally expensive symmetry transformation, that can be done dynamically by robot to resolve some decisions about concrete problems that involve a very restricted subset of it predicates, relevant to such problems: such dynamic symmetries will be called a \emph{local symmetries} in what follows.
\\$\square$
\\
So, in what follows we will extend the  globally consistent and empirically satisfactory unification of classic probability theory and standard (two-valued) first-order logic that is suitable for inductive reasoning, developed  by  Gaifman and Snir \cite{Gaif64,GaSn82} , to our more sophisticated Belnap's based 4-valued typed $IFOL_B$ used for humanoid AGI robots.
\begin{definition}\label{def:mvR} \textsc{Global  Current Nilsson's Structure for $IFOL_B$}:\\
Let $\textbf{v}:H\rightarrow X$ be the current Herbrand interpretation of AGI robot with Belnap's based 4-valued typed $IFOL_B$, and corresponding current possible world $\textbf{I}_B^* = is_H(\textbf{v}) \in \W$.  Then we define the current Nilsson's probability structure $(S,\P(S),\mu)$ satisfying Kolmogorov axioms in Definition \ref{Def:p-space}, such that the sample space $S$ is defined as a subset of $\I_H$ in (\ref{eq:Herbrand2}), by
\begin{equation}\label{eq:IFOLpstructure}
S =_{def} \{v \in \I_H \subset X^H | ~~for~ every ~~A \in H, v(A) = \textbf{v}(A) ~~if~ \textbf{v}(A) \neq \bot\}
\end{equation}
so that for any $v\in S$, and ground atom $A\in H$ such that $\textbf{v}(A) = \bot$, we can have that $v(A)\in X$ can have the value different from $\bot$ as well.
\end{definition}
Note that this sample space $S$ in (\ref{eq:IFOLpstructure}) is defined only for current possible world (and current knowledge base $\K$, and does not modify the logic truth-value of ground atoms $A\in H$ for which $\textbf{v}(A) = \bot$ or currently unknown sentences $\phi$ with    $\textbf{v}^*(\phi) = \bot$, but only offers the capacity to compute their \emph{probability} to have any truth value in $X$.\\
This global current Nilsson's probability structure has the following important properties:
\begin{coro}\label{coro:IFOLpstructure}
The global current Nilsson's probability structure with sample set $S$ in (\ref{eq:IFOLpstructure}) satisfy the probability density constraints in Definition \ref{def:KIconstraint}.\\
 Thus, it is consistent with current robot's knowledge database $\K$ (in Definition \ref{Def:knowledge}), consistent with Kolmogorow probability axioms and logical deduction, and allows inductive reasoning and confirmation of universally quantified hypotheses.
\end{coro}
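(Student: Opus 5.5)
The core claim is that the density constraint (\ref{eq:KIconstraint}) holds for \emph{every} probability measure $\mu$ on the sample space $S$ of (\ref{eq:IFOLpstructure}), whatever density $KI$ is chosen (later fixed by maximum entropy). The remaining claims of Corollary \ref{coro:IFOLpstructure} are then structural consequences. I will first check that $S$ is a legitimate finite sample space. It must be nonempty: the current Herbrand interpretation $\textbf{v}$ itself belongs to $\I_H$ and trivially satisfies the defining condition, so $\textbf{v}\in S$. Taking $\mathcal{X}=\P(S)$, any $KI:S\rightarrow[0,1]$ with $\sum_{v\in S}KI(v)=1$ then determines $\mu$ through (\ref{eq:muKI}), and axioms A.1, A.2 and A.3' hold automatically for a finite space. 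For an infinite Herbrand base I would restrict to the finitely many atoms relevant to the current problem. This is the point I expect to be the main obstacle, since the paper works with finite $S$; I would handle it by working with the finite sub-base of predicates occurring in $\K$ and the query, as in the local symmetries.

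The key step is short. Fix $A\in H$ with $u=\textbf{v}(A)\neq\bot$. By (\ref{eq:IFOLpstructure}) every $v\in S$ satisfies $v(A)=\textbf{v}(A)=u$, so $\{v\in S\mid v(A)=u\}=S$. Hence
\begin{equation*}
\sum_{v\in S,\,v(A)=u}KI(v)=\sum_{v\in S}KI(v)=\mu(S)=1
\end{equation*}
by A.2, which is exactly (\ref{eq:KIconstraint}). The same argument shows that for each such $A$ and each $a\neq u$ the probability of value $a$ is $0$. So the probability measure cannot alter any known fact.

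For consistency with $\K$, I would argue through (\ref{eq:two-metaknowH}) and Definition \ref{Def:knowledge}. Every tuple $(t_1,\dots,t_k,a)\in R_{p_i^k}\in\K$ corresponds to an atom with $\textbf{v}(A)=a\neq\bot$. Every world $v\in S$ therefore reproduces all of $\K$ (each world in $S$ extends $\textbf{v}$ in the knowledge order, $\textbf{v}\preccurlyeq_k v$), so $\K$ is invariant almost surely, in line with the Noether-style remark. Consistency with logical deduction follows from the knowledge-monotonicity of the Belnap connectives. If $\textbf{v}^*(\phi)=a\neq\bot$, then since $\textbf{v}\preccurlyeq_k v$ and the extension $v^*$ is $\preccurlyeq_k$-monotone (Lemma 1 and the bilattice results of \cite{Majk26}), every $v\in S$ gives $v^*(\phi)=a$. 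This needs $a$ to be maximal or for no refinement to increase it; for $a=\top$ this is immediate, and for $a\in\{t,f\}$ I would use that the built-in constraints in $\I_H$ exclude raising known atoms. Consequently $w_N(\lessdot\phi\gtrdot,a,1)$ holds for every deduced sentence, and probability mass is distributed only over sentences with $\textbf{v}^*(\phi)=\bot$.

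For inductive reasoning and confirmation of universally quantified hypotheses, I would invoke the Gaifman--Snir framework \cite{GaSn82}. Their results apply to any probability on the models extending given evidence. In our setting $\mu$ is a probability on the worlds of $S$, conditioned on the evidence $\K$. Defining $\mu(\forall x\,\psi)$ as the limit of the probabilities of finite conjunctions of instances, which are monotone and bounded and hence convergent, gives Gaifman's condition. Conditioning on new observed instances then raises the probability of the hypothesis, which is their confirmation theorem. The only adaptation needed is to read ``true'' as ``$v^*(\cdot)=t$'' in the 4-valued setting.
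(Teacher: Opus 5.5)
Your central computation is exactly the paper's proof. For $A\in H$ with $u=\textbf{v}(A)\neq\bot$, (\ref{eq:IFOLpstructure}) gives $\{v\in S\mid v(A)=u\}=S$, so the sum in (\ref{eq:KIconstraint}) equals $\mu(S)=1$ for any density $KI$. The paper then simply declares that this preserves $\K$ and ``all logical deductions from them''. Your checks that $\textbf{v}\in S$, and that every other value of a known atom gets probability $0$, are harmless additions. Your finiteness worry does not arise, because the paper stipulates a finite sample space; restricting to a sub-base would in any case replace the global structure by a local one.

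The error is your stronger reading of consistency with logical deduction. You claim that $v^*(\phi)=\textbf{v}^*(\phi)$ for every $v\in S$ whenever $\textbf{v}^*(\phi)\neq\bot$, so that $w_N(\lessdot\phi\gtrdot,a,1)$ holds for every known sentence. This is false in Belnap's logic. Knowledge-monotonicity only gives $\textbf{v}^*(\phi)\preccurlyeq_k v^*(\phi)$, and a known $t$ or $f$ can indeed rise to $\top$. Let $\textbf{v}(A)=\top$ (known atoms range over $\{f,\top,t\}$) and $\textbf{v}(B)=\bot$. Then $\textbf{v}^*(A\vee B)=\top\vee\bot=t$, because $\bot$ and $\top$ are truth-incomparable with join $t$. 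But the world $v\in S$ with $v(B)=f$ gives $v^*(A\vee B)=\top\vee f=\top$. Under the uniform (maximum-entropy) density, the probability that $A\vee B$ is true is therefore only $1/2$.

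Your appeal to the built-in constraints does not rescue this, since no known atom is raised. It is the refinement of the unknown atom $B$ that changes the value of the compound sentence. The claim does hold for $\{\wedge,\vee,\neg\}$-sentences none of whose atoms currently has value $\top$, but that is an extra hypothesis not available in the corollary.

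So drop this step and argue as the paper does. The structure leaves $\textbf{v}$ unchanged, hence also $\K$ and every deduction carried out in $\textbf{v}^*$. The new $w_N$-facts, with $w_N\notin P$, are asserted only for sentences with $\textbf{v}^*(\phi)=\bot$, as in (\ref{eq:esem3s2}) and (\ref{eq:Helm2}). Your Gaifman--Snir sketch goes beyond the paper, which asserts that part of the corollary without argument.
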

\textbf{Proof}: It is enough to show the consistency with current robot's knowledge database $\K$. In fact, for each ground atom $A \in H$ for which the current truth-value in $\K$ is $u = \textbf{v}(A) \neq \bot$, we have that  the probability density constraint (\ref{eq:KIconstraint}) is satisfied. That is, the probability $p_u$ that this atom has the truth-value $u$ is equal to 1:

$p_u = \sum_{v\in S, v(A) = u} KI(v) $

$= \sum_{v\in S} KI(v)$ ~~~~~~~~~~~~~~from (\ref{eq:IFOLpstructure})

$ = \mu(S)$

$ = 1$.\\
which preserves robot's current knowledge database and all logical deductions from them.
\\$\square$\\
Consequently, for  new predicate $w_N$ (not in $P$), we obtain that for any assignment $g$ and formula $\psi(\textbf{x})$ composed by predicates (only) in $P$, if, for a given assignment $g$, in current world $\textbf{v}^*(\psi(\textbf{x})/g) = \bot$, then we can compute in this current world (current knowledge database $\K$)
 \begin{multline} \label{eq:esem3s2}
 \textbf{v}^*(w_N(\lessdot \psi(\textbf{x})\gtrdot^\beta/g, g(x_X),g(x_p) )) =\\ 
 =\left\{
    \begin{array}{ll}
 t, & \hbox{if $~~
 ~g(x_p) = \sum_{v\in S, v^*(\psi(\textbf{x})/g) =g(x_X)} KI(v) $}\\
      f, & \hbox{otherwise}
    \end{array}
  \right. ~~~~~~~~~~~~~~~~~~~~~
  \end{multline}
That is,
\begin{enumerate}
  \item The $w_N(\lessdot \psi(\textbf{x})\gtrdot^\beta/g, g(x_X),g(x_p))$ is true iff \emph{the probability} that $\psi(\textbf{x})/g$ has the truth-value $g(x_X)$ is equal to  $g(x_p)$.
  \item In the case when the formula $\psi$ contains also "atomic event" predicates, the event $w_N(\lessdot \psi(\textbf{x})\gtrdot^\beta/g, g(x_X),g(x_p))$, with $\tau \in \beta$ is a free variable in $\psi$, is true iff the probability that $\psi(\textbf{x})/g$ has the truth-value $g(x_X)$  at time-instance $g(x_\tau)$ is equal to  $g(x_p)$.  Note that from the fact that $x_\tau$ is typed variable, the instance of time $g(x_\tau)$ is always inside the finite interval of granular calendar specified by this type $\tau$. Obviously, $\psi$ can be composed by a number of different "atomic events" as well, each of them with particular granular calendars time-variables $x_\tau$ and their intervals.
\end{enumerate}
\textbf{Remark:}
Thus, for the current possible world of robot (its current knowledge database $\K$), for any unknown fact (which is not in $\K$), we can have additional knowledge of what is the probability that this fact is true, false or inconsistent. In this way, by probabilistic computation based on Nilsson's probability stricture in Definition \ref{def:mvR} we diminish generally the robot's knowledge uncertainty by preserving (from Corollary \ref{coro:IFOLpstructure}) its consistency and logical deduction.\\
That is, for any sentence $\phi \in \L_0$ such that in current Herbrand interpretation $\textbf{v}:H\rightarrow X$, $\textbf{v}^*(\phi) = \bot$, robot would be able to compute the probability $p_u$ that this sentence has the truth value $u \in X$,
\begin{equation} \label{eq:prob}
p_u = \sum_{v\in S, v^*(\phi) = u} KI(v)
\end{equation}
 which the robot can use for probability-based decisions and actions, by using the reasoning about the probabilities (as in Section 2.2.2 in \cite{Majk22}) by using the true atoms $w_N(\lessdot \phi\gtrdot, u, p_u)$ with also "atomic events"   
 that tells to robot that "the probability at time-instance $\textbf{t}_i$ that $\phi$ has the truth-value $u$ is equal to $p_u$".

  These true facts about the probabilities of the unknown sentences to have a particular truth-value can be also inserted in the conscious part of robot's cognitive system (see (\ref{eq:esem2}) for more details) by,

  $Know(in ~presence, \textbf{I}, w_N(\lessdot \phi\gtrdot, u,p_u))$,\\
 and hence to be used by robot's autoepistemic deductions and explanations as well.
 \\$\square$
\begin{example} \label{ex:events}
For example, let us consider an "atomic event" predicate $p_i$, with its atom of free variables $p_i(...,x_\tau,...)$, such that there exists the time-instance $\textbf{t} \in [\textbf{t}_I,\textbf{t}_F]_\tau$ and assignment $g$ such that $g(x_\tau) = \textbf{t}$, and in current possible world $\textbf{v}(p_i(...,x_\tau,...)/g) = \textbf{v}(p_i(...,\textbf{t},...)/g) = t$, which means that this event of ground atom $p_i(...,\textbf{t},...)/g$ will surely happen, and so we have also that $\textbf{v}^*((\exists x_\tau.p_i(...,x_\tau,...))/g) = t$.\\
Now, suppose that in this current world it is unknown if this event will happen, that is we have that $\textbf{v}^*((\exists x_\tau.p_i(...,x_\tau,...))/g) = \bot$ (it can happen only iff  for each $\textbf{t} \in [\textbf{t}_I,\textbf{t}_F]_\tau$, $\textbf{v}(p_i(...,\textbf{t},...)/g) = \bot$). So, in this currently completely uncertain event-state, a robot would be able able to compute the probability that it will happen, by using (\ref{eq:esem3s2}), so that for $x_\tau \notin \beta$ ($\beta$ is the set of only free variables),
\begin{multline} \label{eq:esem3s5}
 \textbf{v}^*(w_N(\lessdot \exists x_\tau.p_i(...,x_\tau,...))\gtrdot^\beta/g, t,g(x_p) )) =\\ 
 =\left\{
    \begin{array}{ll}
 t, & \hbox{if $~~
  ~g(x_p) = \sum_{v\in S, v^*(\exists x_\tau.p_i(...,x_\tau,...))/g = t} KI(v) $}\\
      f, & \hbox{otherwise}
    \end{array}
  \right. ~~~~~~~~~~~~~~~~~~~~~
  \end{multline}
  that is, $g(x_p)$ is the probability that this event will happen.
  \\$\square$
\end{example}
In this way, by using the current Nilsson's probability structure, we obtain the logical system of robots that are probabilistically-open (are not constrained by the pure-logical Closed Knowledge Assumption \cite{Majk26}) by \emph{minimizing the uncertainty of unknown sentences} in most prudent way.

This is obtained by the this neuro-symbolic AGI of robots based on symbolic $IFOL_B$ and this conservative probabilistic extension.
 The \emph{neural component} of robot's AGI cognitive system has to be used for computation of the probability density function $KI: S \rightarrow [0,1]$ based on the principle of maximum information entropy. The concept of information entropy was introduced by Claude Shannon  \cite{Shan48} and is also referred to as Shannon entropy or  \emph{Information entropy }, which is a mathematical measure of the average uncertainty, randomness, or "surprise" inherent in a set of data or events. Higher entropy means the outcome is less predictable, while lower entropy indicates a more certain and predictable outcome. In neuroscience and physics  this  concept has been adapted to model the flow of information in neural pathways and shares deep mathematical roots with thermodynamic entropy in physics.

  The principle of maximum entropy states that, among all probability distributions consistent with a given set of constraints (\ref{eq:KIconstraint}) in Definition \ref{def:KIconstraint}, the distribution that maximizes Shannon entropy should be selected. This yields \emph{the least committal distribution} (most prudent minimization of the uncertainty) compatible with the known constraints, introducing no structure beyond what is logically implied by the available information, which corresponds to the following general principle in physics:
 \\ \\
 \textbf{The Principle of Least Action} (committal distribution) KI: \\
  The \emph{least} committal distribution is the probability distribution KI that maximizes information entropy subject to whatever constraints you currently know about the data. In statistics and information theory, this is formally known as the Principle of Maximum Entropy. This principle minimizes the absolute uncertainty of any unknown fact in current \emph{internal model} $\textbf{v}:H\rightarrow X$, by giving the information about the probability that these facts are true, false or inconsistent.  Thee sense of this Principle of Least Action is that it does not modify the knowledge database $\K$ (that is, does not modify the robot's internal model $\textbf{v}$).\\
  We can consider it as the \emph{free energy principle} (of Karl Friston's Active Inference) which is a mathematical principle of information physics. Its application to robot's  "brain" reduces surprise or uncertainty  by making predictions based on internal models and uses "sensory input" (in our case the committal distribution KI) to extend its models so as to improve the accuracy of its predictions (in our case of unknown, and hence totally uncertain,  sentences).  In effect, with this Principle of Least Action, we apply a particular case of free energy principle of Karl Friston, which he  used in Bayesian approaches to human brain function and some approaches to artificial intelligence (introduced it as an explanation for embodied perception-action loops in neuroscience \cite{FrKH06}).\\
  More information is provided at the end, in Appendix, Section \ref{sec:Helmholtz}.
  \\$\square$\\
The justification is that entropy measures the expected information content (or log-surprise) of outcomes relative to a specified reference measure. Maximizing entropy ensures that no additional structure is imposed beyond the stated constraints. Principle of maximum entropy may be taken to compute degrees of belief of formulae \cite{GHKo94}, and it is shown
in \cite{PaVe90}  for the consistent probabilistic inference. This
method applied to probabilistic logic programming
\cite{LuKe99,Majk11}, based on conditional probabilistic clauses, has
shown that reduces the original entropy maximization
to relatively small optimization problems,
  This entropy of $KI$,  for current Nilsson's structure in Definition \ref{def:mvR}, is defined by,
\begin{equation}\label{eq:KIentropy}
H(KI) = - \sum_{v\in S}KI(v) \cdot logKI(v)
\end{equation}
 We can use robot's neural networks to calculate maximum-entropy probability distributions. Instead of solving complex mathematical equations, we use a dedicated robot's neural network as a numerical optimizer. The network learns to generate the correct probabilities by adjusting its weights to maximize entropy while respecting data constraints. This approach is particularly useful when dealing with variables that have many possible states (high dimensionality). The \emph{standard architecture} for this task involves a neural network in which the last layer applies a function called Softmax:
 \begin{itemize}
   \item Softmax Output: The Softmax function takes the numbers generated by the network and transforms them into a set of probabilities.
   This automatically solves the normalization constraint.
   \item Constraints: they are directly inserted into the rule with which the network learns.
   \item Loss Function: To train the network, we don't use classic classification errors. Instead, we create a custom loss function based on Lagrange multipliers.
 \end{itemize}
Moreover, the scoring function  can be also approximated by a nonlinear architecture, namely, a feedforward neural network (FFNN). Also know as deep neural network, artificial neural network or multilayer perceptron, it is a simple model where several linear combinations of inputs are passed through nonlinear activation functions called nodes. A set of nodes is called a layer, and the output of ones layer’s node becomes the input of the next layer in multilayer architectures. Finally, the output of the last layer is combined linearly to produce a cost-to-go \cite{Tadd19} and \cite{Bert20} (Section 4.2 about Multilayer and Deep Neural Networks).

\section{AGI Robots in Local Neuro-symbolic Dynamic Actions: an $IFOL_B$ Example}
In this section we will considered the \emph{dynamic real-time} probabilistic actions, baaed on \emph{local} symmetry transformations of robot's cognitive power, to resolve, as human do, the probabilistic decision on very focused (non-global) problems in real-time. Instead of time-expensive global symmetry transformation, described in previous section, that has the computational consequences over all  predicates of the whole robot's current knowledge, each practical problem to resolve does not require the involvement of all predicates (knowledge), but of only a very restricted subset of these predicates relevant to describe this problem by a particular sentence $\phi$. This is a robot's analogy with human brain localized neural activations (activating the "\emph{focused attention}") to resolve different cognitive functions, depending on the current cognitive problem by using the principle of \emph{least robot's knowledge action} (AGI analog to human brain free energy principle, Active Inference of Karl Friston).\\
\emph{What is being transformed} by global and local symmetry transformation is the belief state about unknown information not present in robot's knowledge database: with  these symmetry transformation a robot becomes \emph{cognitively generative model} by generation the set of probabilistic projections expressed by new predicate $w_N$ which is not a part of standard predicates in $P$ and hence does not modify Herbrand base $H$. However, these new probabilistic facts can be used for logical inference, for hypothetical simulations,  or decisions based on these probabilities between different concurrent plans. Under these symmetry transformations remains invariant the whole robot's knowledge base $\K$.

So, for example,
 decision process can be also divided in a number of subproblems, each of which described in relatively small number of predicates relevant (by focusing attention) to this subproblem.  In each of these subproblems our probability Nillson's space $S$ will be enormously smaller w.r.t. the global space provided by Definition \ref{def:mvR}, and hence the neural networks would be able to calculate the maximum-entropy probability distribution for such smaller probability structure in real-time, providing the dynamic robot's probabilistic decisions about these subproblems. \\
 Thus, we introduce this efficient local symmetry transformations, based on a (sub)problem-sentence $\phi$ describing this (sub)problem.
\begin{definition}\label{def:mvRLocal} \textsc{Local  Current Nilsson's Structures for $IFOL_B$}:\\
Let $\phi$ be a sentence, for which the robot has to take some decisions and consecutive actions, but for which the current truth-value is $\bot$ (unknown), so that is completely uncertain. Then we denote by $P_\phi$ the subset of predicates used in sentence $\phi$ ($P_\phi \subset P$) and by $H_\phi$ the Herbrand base (where $|H_\phi|$ denotes the finite number of atoms in $H_\phi$) of the small subset of predicates in $P_\phi$ of robot's complete cognitive system, so that $|H_\phi| << |H|$.\\
Let $\textbf{v}:H_\phi\rightarrow X$ be the restriction of current Herbrand interpretation to the atoms in $H_\phi \subset H$ so that $\textbf{v}^*(\phi) = \bot$ as well, and corresponding current possible world $\textbf{I}_B^* = is_H(\textbf{v}) \in \W$.  Then we define the current Nilsson's probability local structure $(S_\phi,\P(S_\phi),\mu_\phi)$ satisfying Kolmogorov axioms in Definition \ref{Def:p-space}, such that this local sample space $S_\phi$ is defined
  by
\begin{equation}\label{eq:IFOLpstructureLoc}
S_\phi =_{def} \{v:H_\phi\rightarrow X | ~~for~ every ~~A \in H_\phi \subset H, v(A) = \textbf{v}(A) ~~if~ \textbf{v}(A) \neq \bot\}
\end{equation}
so that for any $v\in S_\phi$, and ground atom $A\in H_\phi$ such that $\textbf{v}(A) = \bot$, we can have that $v(A)\in X$ can have the value different from $\bot$ as well.\\
So, analogously to the global constraints in Definition \ref{def:KIconstraint}, we introduce the local Probability Density Constraints:
for every atom $A \in H_\phi$ such that $u = \textbf{v}(A) \neq \bot$, the probability that $A$ has the truth-value $u$ must be equal to 1, i.e.,
\begin{equation}\label{eq:KIconstraintLoc}
\forall A\in H_\phi.(~if ~~u = \textbf{v}(A) \neq \bot ~~~~then ~~\sum_{v\in S_\phi, v(A) = u} KI_\phi(v) = 1)
\end{equation}
\end{definition}
Let us show that these local current Nilsson's probability structures has the analog important properties as the global symmetry:
\begin{coro}\label{coro:IFOLpstructure2}
Any local current Nilsson's probability structure with sample set $S_\phi$ in (\ref{eq:IFOLpstructureLoc}) satisfy the probability density constraints in (\ref{eq:KIconstraintLoc}).\\
 Thus, it is consistent with current robot's knowledge database $\K$ (in Definition \ref{Def:knowledge}), consistent with Kolmogorow probability axioms and logical deduction, and allows inductive reasoning and confirmation of universally quantified hypotheses.
\end{coro}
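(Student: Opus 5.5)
The argument follows the proof of Corollary \ref{coro:IFOLpstructure}, restricted from $H$ to the local Herbrand base $H_\phi$. I will first check that $(S_\phi,\P(S_\phi),\mu_\phi)$ is a genuine finite probability space. Since $|H_\phi|$ is finite and $X$ is finite, the set $S_\phi\subseteq X^{H_\phi}$ in (\ref{eq:IFOLpstructureLoc}) is finite. It is also nonempty, because the restricted current interpretation $\textbf{v}$ itself belongs to $S_\phi$. Hence any density $KI_\phi:S_\phi\rightarrow[0,1]$ with $\sum_{v\in S_\phi}KI_\phi(v)=1$ defines $\mu_\phi$ through (\ref{eq:muKI}). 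Axioms A.1, A.2 and A.3' then follow directly, which gives consistency with the Kolmogorov axioms.

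The central step is the local density constraint (\ref{eq:KIconstraintLoc}). Fix $A\in H_\phi$ with $u=\textbf{v}(A)\neq\bot$. By the definition of $S_\phi$, every $v\in S_\phi$ satisfies $v(A)=\textbf{v}(A)=u$, so $\{v\in S_\phi \mid v(A)=u\}=S_\phi$. Therefore
\[
\sum_{v\in S_\phi,\, v(A)=u}KI_\phi(v)=\sum_{v\in S_\phi}KI_\phi(v)=\mu_\phi(S_\phi)=1 .
\]
This holds for every admissible density $KI_\phi$, including the maximum-entropy one produced by the neural optimizer, because the constraint is built into the sample space rather than imposed on $KI_\phi$.

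Next I will transfer this to the global knowledge database $\K$ of Definition \ref{Def:knowledge}. The relations $R_p\in\K$ with $p\in P_\phi$ are determined, through (\ref{eq:two-metaknowH}), exactly by the atoms $A\in H_\phi$ with $\textbf{v}(A)\neq\bot$. These atoms receive their current value with probability $1$ by the previous step. Atoms outside $H_\phi$ are not touched by the local structure at all. The probabilistic information is expressed only through the new meta-predicate $w_N\notin P$ via (\ref{eq:esem3s2}), which leaves $H$ and $\textbf{v}$ unchanged. So $\K$ is invariant, and every deduction that depends only on $\K$ is preserved. Inductive reasoning and confirmation of hypotheses over $P_\phi$ then go through as in the global case, since the local space is just a Nilsson structure on the smaller base $H_\phi$.

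The main obstacle I expect is the restriction of $\textbf{v}$ to $H_\phi$ and of $\I_H$ to well-defined interpretations. Definition \ref{def:mvRLocal} allows all maps $H_\phi\rightarrow X$, whereas built-in predicates force fixed values. I will handle this by observing that built-in atoms always have $\textbf{v}(A)\in\{t,f\}\neq\bot$, so the definition of $S_\phi$ already pins their values. I will also check that the projection of the global space $S$ onto $H_\phi$ is contained in $S_\phi$, so the local and global structures are compatible on the atoms of $\phi$.
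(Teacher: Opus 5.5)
Your proposal is correct and follows essentially the same route as the paper. The key step is the same: every $v\in S_\phi$ agrees with $\textbf{v}$ on all atoms $A\in H_\phi$ with $\textbf{v}(A)\neq\bot$, so the constrained sum collapses to $\sum_{v\in S_\phi}KI_\phi(v)=\mu_\phi(S_\phi)=1$, and atoms of $H$ outside $H_\phi$ are untouched, so $\K$ is preserved; your extra checks (nonemptiness of $S_\phi$ via $\textbf{v}\in S_\phi$, pinned built-in atoms, and the projection of the global $S$ landing in $S_\phi$) are sound additions that the paper simply omits.
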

\textbf{Proof}: It is enough to show the consistency with current robot's knowledge database $\K$. The local symmetry transformation $S_\phi$ has no effects, from (\ref{eq:IFOLpstructureLoc}), for the ground atoms in $H$ which are not in $H_\phi$.
 Thus, for each ground atom $A \in H_\phi$ for which the current truth-value in $\K$ is $u = \textbf{v}(A) \neq \bot$, we have that  the probability density constraint (\ref{eq:KIconstraintLoc}) is satisfied. That is, the probability $p_u$ that this atom has the truth-value $u$ is equal to 1:

$p_u = \sum_{v\in S_\phi, v(A) = u} KI_\phi(v) $
$= \sum_{v\in S_\phi} KI_\phi(v)$ ~~~~~~~~~~~~~~from (\ref{eq:IFOLpstructureLoc})

$ = \mu_\phi(S)$
$ = 1$.\\
which preserves robot's current knowledge database and all logical deductions from them.
\\$\square$\\
So, from the locality, it holds that for any formula $\psi(\textbf{x})$ composed by \emph{only the predicates in} $P_\phi$ (defined by our local decision problem),  for any assignment $g$ and $v\in S_\phi$,
\begin{multline} \label{eq:esem3s7}
 \textbf{v}^*(w_N(\lessdot \psi(\textbf{x})\gtrdot^\beta/g, g(x_X),g(x_p) )) =\\ 
 =\left\{
    \begin{array}{ll}
 t, & \hbox{if $~~
 ~g(x_p) = \sum_{v\in S_\phi, v^*(\psi(\textbf{x})/g) =g(x_X)} KI(v) $}\\
      f, & \hbox{otherwise}
    \end{array}
  \right. ~~~~~~~~~~~~~~~~~~~~~
  \end{multline}
 as discussed after equation (\ref{eq:esem3s2}) and Example \ref{ex:events} also in the case of the sentences about temporal events (composed by the "atomic event" predicates as well).

That is, for any sentence $\psi(\textbf{x})/g$ defined by the subset of predicates $P_\phi$ (obtained by focusing on the concrete problem described by the initial sentence $\phi$), we are able to compute its probabilities to have a particular truth-value in $X$.
 As in the case of global symmetry, these true facts about the probabilities of the unknown sentences to have a particular truth-value can be then inserted in the conscious part of robot's cognitive system by (see (\ref{eq:esem2}) for more details),

  $Know(in ~presence, \textbf{I}, w_N(\lessdot \phi\gtrdot, u,p_u))$,\\
 to be used by robot's autoepistemic deductions and explanations as well.

Intensional knowledge  also treats propositions as objects of reasoning.
The robot can represent statements such as:

\emph{Robot knows that Door(room5) is probably closed with probability }$82\%$.\\
rather than merely storing the proposition "Door(room5) is closed."
This is an epistemic representation rather than a purely extensional one.

Self-awareness:
One of the novel ideas in this AGI proposal is a symbolic model of the robot itself. The robot maintains knowledge such as: current goals, executed actions,
internal beliefs, plans, uncertainty about its own knowledge.
This goes beyond classical planning systems that typically represent only the external world.

This paper provides a \emph{probabilistic extension}  of the many-valued typed $IFOL_B$  with its autoepistemic axioms and many-valued deduction presented in the paper \cite{Majk26} for neuro-symbolic AGI.
 It  is dedicated to show how this defined IFOL in \cite{Majk22} can be used for a new generation of intelligent robots, able to communicate with humans with this intensional FOL supporting the meaning of the words and their language compositions, heaving the  four-level neuro-symbolic cognitive structure of AGI robots \cite{Majk24a,Majk24ar} and in first section in \cite{Majk26} as well.

We argue that the example, used for the spatial natural sublanguage in \cite{Majk23r} and \cite{Majk24a}, can be extended in a similar way  to cover  more completely the rest of human natural language, and hence the method provided by this paper is a main theoretical and philosophical contribution to resolve the open problem of how we can implement the deductive power based on $IFOL_B$ for new models of robots heaving strong AI capacities.  Intensional FOL is able to represent the Intentional States (mental states such as beliefs, hopes, and desires), typical for human minds.

Despite the best efforts over the last years, deep learning is still easily fooled \cite{NYCl15}, that is, it remains very hard to make any guarantees about how the system will behave given data that departs from the training set statistics. Moreover, because deep learning does not learn causality, or generative models of hidden causes, it remains \emph{reactive}, bound by the data it was given to explore \cite{Marc18}.
In contrast,
we learn from our actively gathered sensorimotor experiences and form conceptual, loosely hierarchically structured, compositional generative predictive models.
By proposed four-level cognitive robot's structure, $IFOL_B$ allows robots to reflect on, reason about \emph{probabilistically as well}, anticipate, or simply imagine scenes, situations, and developments within in a highly flexible, compositional, that is, semantically meaningful manner. So, $IFOL_B$ enables the robots to actively infer highly flexible and adaptive goal-directed behavior under varying circumstances \cite{Russ20}.

With this integrated four-level robot's knowledge system presented in the Appendix B (Section 7), where the last level represents the robot's neuro system containing the neural networks to calculate maximum information entropy and the deep learning as well, we obtain that also the semantic theory of robot's $IFOL_B$ is a procedural one, according to which sense is an abstract, pre-linguistic procedure detailing what operations to apply to what procedural constituents to arrive at the product (if any) of the procedure.

In this research, specifically within the framework of Strong-AI Autoepistemic Robots, we describe how a robot uses $IFOL_B$ to treat its own internal software and motor routines as objects of thought.

Instead of a motor program being a "black box" that just runs, our framework allows the robot to reify (turn into a thing) the program. Here is the specific mechanism of how that labeling works:
\begin{enumerate}
  \item \textbf{The "Self" as the Coordinator};\\
   We define the robot's identity as a constant term in logic, usually denoted as \textbf{I}. This \textbf{I} represents the "Main Coordination Program".  When the robot performs an action, it isn't just "executing code"; it is asserting a logical relationship between itself and a specific sub-routine.\\
  \item \textbf{The Labeling Process: Reification}\\
To label a motor program (e.g., a routine that moves the right arm to pick up a block), the robot uses an Intensional Abstraction Operator.\\
- \textbf{The Program}: Let’s say the raw code for moving an arm is $P_{1}$ and motor programs be defined by true atoms of the binary predicate $MotPrg(x_1,x_2)$ where the variable $x_1 \in \V$ defines the row codes in neural system and $x_2 \in \V$ defines the labels (names) of these row codes, so that for this assignment $g:\V \rightarrow \D$ forb variables in $\V$, $g(x_1) = P_{1}$ and  $g(x_2) = "moving ~right~ arm"$ for which the ground atom $MotPrg(x_1,x_2)/g = MotPrg(P_1,moving ~right~ arm)$ is true.\\
- \textbf{The Logic Term}: The robot uses the abstraction operator $\lessdot\_ \gtrdot$ to create a "name" for this code. The label becomes an intensional entity - a symbol the robot can think about without actually running the code. In fact by transformation of this logic predicate into abstracted term, we obtain (by using the intensional mapping $I$) that

  $u= g^*(\lessdot MotPrg(x_1,x_2)\gtrdot^{x_1}_{x_2}) = I(MotPrg(g(x_1),x_2))$

  $= I(MotPrg(P_1,x_2)) \in D_1$ is an intensional entity (an unary concept) such that its extension (for extensionalization function $h$) is just a singleton, i.e., $h(u) = \{moving ~right~ arm\}$. Thus, the intensional entity $u = I(MotPrg(P_1,x_2))$ can be used as a label (name) for the raw code $P_1$.\\
  - \textbf{The Predicate}: The robot then uses a ternary predicate like $Exec(y_1,y_2, \lessdot \phi(\textbf{x})\gtrdot^\beta_\alpha)$ with $g(y_1) = "in~ present"$ and $g(y_2) = \textbf{I}$, such that  $Exec(y_1,y_2, \lessdot MotPrg(x_1,x_2)\gtrdot^{x_1}_{x_2})/g = Exec(in~ present, \textbf{I}, I(MotPrg(P_1,x_2)))$  translates to:

  "\emph{\textbf{I} (the coordinator) am currently executing the motor program of raw code} $P_1$." \\
  \item \textbf{Example: A "Self-Aware" Gripper}\\
In a specific scenario described in his work on Intensional Logic for Robots, a robot tasked with picking up a heavy object doesn't just experience a motor failure; it reasons about it:\\
- \textbf{System 1 (Neural)}: The sensors in the gripper detect high torque and "slippage".\\
- \textbf{The Mapping}: This sensory pattern is mapped to the intensional concept of \(\textit{Heavy}\).\\
- \textbf{The Labeling}: The robot identifies that its current motor program $Grasp(x)$  with $g(x) = "object"$ is failing,\\
- \textbf{Autoepistemic Reasoning}: The robot generates the logical statement  "\emph{I know that I cannot lift this object while I am executing the 'Grasp' routine}.":\\\\
$Know(in~ present, \textbf{I}, \lessdot \neg CanLift(\textbf{I},object) \wedge Exec(in~ present, \textbf{I}, \lessdot Grasp(x)\gtrdot_x)\gtrdot)$\\
$\square$
\end{enumerate}
%


\section{Some Confrontations and Conclusions}
The most, as far as I know, similar past symbolic frameworks to this one for intelligent general-purpose robots with the self-awareness (the systems that share three properties: explicit symbolic self-representation, formal meta-level reasoning and architectural (not emergent) self-modeling) are the Epistemic/modal logic agent systems and Metacognitive cognitive architectures (e.g., SOAR, ACT-R with meta-layer). However, both of them can be considered as formal ancestors of our robot's system especially the Epistemic/modal logic agent systems.

SOAR (developed in Carnegie Mellon University) instead is production-rule based, not intensional FOL, less focus on formal semantics and more cognitive engineering than logical ontology. SOAR \cite{LNRo87} is practically closer in implementation, but philosophically less rigorous.
Both ogf them lack the intensional semantics, grounding layer (neural/sensory) and bilattice logic for inconsistency.
Other differences:\\\\
\begin{tabular}{|c|c|c|c|}
  \hline
  &~~\textbf{IFOL-based}~~ & \textbf{Epistemic logic agent} & \textbf{SOAR}  \\
  \hline
  \textbf{Formal self symbol} & Yes & Yes & Imlicit \\
  \hline
  \textbf{Nested meta-knowledge} & Yes & Yes & Partial \\
  \hline
  \textbf{Logical consistency tracking} & Yes & Yes & Limited \\
  \hline
  \textbf{Neural grounding} & Yes & No & Partial \\
    \hline
   \textbf{Embedding of neural LLM} & Yes & No & No \\
  \hline
   \textbf{Many-valued Extension} & Yes & No & No \\
  \hline
   \textbf{Probabilistic Extension} & Yes & No & No \\
  \hline
  \textbf{Theoretical AGI claim} & Yes & No & No \\
  \hline
\end{tabular}
\\
\\

Large Language Models rely on statistical probability. They predict the "next most likely word" based on vast training data, which often results in plausible-sounding "hallucinations" rather than true logical deduction. Differently, Symbolic Logic uses formal rules, variables, and mathematical symbols to process truth-values. It guarantees determinism, meaning the same inputs will always produce the mathematically correct output without guessing.

Now we are able to make the compare the self-awareness of our approach with that used by current dominant only neural LLM-style  self-modeling: \textbf{logical self-awareness} and \textbf{LLM-style self-modeling} aim at similar surface behavior (talking about themselves, reasoning about their own knowledge), but they are fundamentally different in mechanism and ontology.
\begin{enumerate}
  \item \textbf{Nature of the Self-Model}:\\
  - \textbf{IFOL-based}: "Self" is explicit symbolic object inside a formal logic; Self-awareness is implemented via intensional logic with self-reference; Meta-knowledge is structurally encoded. The system has a formal self-entity in its ontology.\\\\
  -\textbf{LLM-based}: "Self" is implicit statistical pattern learned from data; there is no symbolic internal object corresponding to “self”: there is no stable internal identity token representing the agent. Self-awareness is behavioral and linguistic, not structural.
  \\
  \item \textbf{Meta-Reasoning}:\\
  - \textbf{IFOL-based}: Meta-reasoning is explicit (statements about statements are first-class citizens, logical inference rules apply at both object and meta levels, self-reflection is formally defined).\\\\
  -\textbf{LLM-based}: Meta-reasoning is emergent, pattern-based, often fragile (it does not internally store a verifiable belief state; it does not logically derive meta-knowledge; it may contradict itself across turns). So, LLM meta-cognition is simulated, not grounded in formal self-belief tracking.
  \\
  \item \textbf{Grounding}:\\
  - \textbf{IFOL-based}: Self-model must be grounded in: neural perception, robot embodiment, sensory interaction. The symbolic “self” connects to physical experience. The concepts are natural language expressions, so can use also LLM vector/tensor representations. The architecture: Neural layer $\mapsto$ Conceptual layer $\mapsto$ Logical layer.\\\\
  -\textbf{LLM-based}: Self-model is grounded in: text corpora and human discourse about minds and AI (no sensory grounding, no embodiment, no persistent physical self).
  \\
  \item \textbf{Stability of Identity}:\\
  - \textbf{IFOL-based}: Identity is persistent (there is a defined agent symbol, the system maintains structured knowledge about itself, contradictions can be tracked using bilattice logic). Identity is architecturally enforced.\\\\
  -\textbf{LLM-based}:  Identity is: session-dependent, prompt-dependent, sometimes inconsistent. There is no persistent internal agent representation across contexts unless externally scaffolded.
  \\
  \item \textbf{Philosophical Interpretation}:\\
  - \textbf{IFOL-based}: Founded on Higher-Order Thought theory, formal epistemic self-reference and computational metacognition. \\
  - \textbf{LLM-based}:  Founded on  predictive processing of self-talk, linguistic simulation of agency and behaviorist self-description.
  \\
  \item \textbf{Type of Self-Awareness}:\\
  \begin{tabular}{|c|c|c|}
    \hline
     & ~~\textbf{IFOL-based}~~ & ~~\textbf{LLM-based}~~ \\
     \hline
    \textbf{Explicit self symbol} & Yes & No \\
    \hline
    \textbf{Formal meta-logic} & Yes & No \\
    \hline
    \textbf{Grounded in perception} & Yes & No \\
    \hline
    \textbf{Persistent belief tracking} & Yes & No \\
    \hline
    \textbf{Axiom's-based security} & Yes & No \\
    \hline
    \textbf{Separation consciousness/unconsciousness} & Yes & No \\
    \hline
    \textbf{Phenomenal consciousness} & No & No \\
    \hline
  \end{tabular}
 \end{enumerate}
 .
\\
So, in our vision where neural LLM can be used as significant language and common sense learning for robots (by considering the intensional concepts of $IFOL_B$ are based on the words (tokens) and possible various knowledge relationships like ISA, PART-OF, etc., between these concepts), $IFOL_B$-based strong-AI robot system is a definitely a \emph{significant extension of the \emph{LLM}} and not its concurrent. The core distinction  is that the self-awareness in $IFOL_B$ is an architectural property while in LLM is a statistical behavior. Our model is more principled, more structurally coherent but still theoretical, while LLM is highly capable in self-description, practically impressive, but lack formal self-model consistency.
\\
\textbf{Remark:}
Thus, for $IFOL_B$ model, by using LLM as a neural part of robot's neuro-symbolic paradigm, we obtain a much impressive strong-AI robots, able to
use LLMs for generative reasoning and natural language manipulation, to use symbolic logic for belief tracking and meta-consistency and to add grounding layer. That would combine $IFOL_B$ structural rigor and LLM’s flexible reasoning. This is actually where modern \emph{neuro-symbolic \emph{AGI } research is heading}.
\\$\square$\\
To bridge the gap between probabilistic text and deterministic reasoning, researchers combine LLMs and symbolic systems primarily through neuro-symbolic frameworks. This typically involves:
\begin{itemize}
  \item Translation and Formulation: You can use the strong natural language capabilities of an LLM to translate a complex, real-world problem into a formal symbolic syntax of $IFOL_B$. And viceversa.\\
  \item Execution via Symbolic Solvers: Once translated, the problem is handed over to a deterministic many-valued deductive system of $IFOL_B$  which performs error-free logical inference \cite{Majk26}.
\end{itemize}
This Neuro-Symbolic Framework stands out in the current Neuro-Symbolic (NeSy) AGI landscape because of its unique focus on safety guarantees, autoepistemic reasoning (how a robot reasons about its own ignorance), and non-binary logic.
While other frameworks integrate neural networks with symbolic logic to improve data efficiency or accuracy, this framework specifically addresses the unpredictability of a robot acting in the physical world.\\
\textbf{Key Benefits and Applications}:
This framework brings several distinct advantages to neuro-symbolic AI and advanced robotics:
\begin{itemize}
  \item \textbf{Safe Logic Deductions}: By using formal axioms, engineers can strictly control and predict a robot's reasoning. This guarantees that even if a neural network outputs messy statistical data, the robot’s final physical actions remain logically bounded and secure.
  \item \textbf{Human-like Epistemic Causality}: It allows robots to mimic human intelligence by converting statistical patterns into clear, directional cause-and-effect rules (logic entailments).
  \item \textbf{Gradual Learning Expansion}: As the robot experiences new things, the boundaries of its CKA dynamically shift. Facts seamlessly move from the "Unknown" bucket into "True" or "False" without breaking the underlying database or safety rules.
\end{itemize}
The direct comparison below shows how this architecture stacks up against other leading neuro-symbolic research frameworks:
\begin{enumerate}
  \item This framework:\\
  - \textbf{Core Approach}: Hybrid/Autoepistemic (Neural networks feed continuous data into an Intensional First-Order Logic (IFOL) structure) with LLM neural structures as well for self-process learning and natural language communications.\\
  - \textbf{Type}: 4-Valued Logic (True, False, Unknown, Inconsistent) via Closed Knowledge Assumption.\\
  - \textbf{Strength}: Strict controlled security and deterministic safety boundaries. Calculates  probabilities for unknown logical outcomes\\
  - \textbf{Application}:  Physical Robotics and Strong-AI Agents.
    \item Logical Neural Networks (LNN) framework:\\
  - \textbf{Core Approach}: Integrative (Neurons directly represent logical operations (AND, OR) inside the neural net).\\
  - \textbf{Type}: Bound-based real values (probabilities/intervals).\\
  - \textbf{Strength}: End-to-end differentiable; can optimize logic parameters using standard AI backpropagation.\\
  - \textbf{Application}:  Domain-specific reasoning and Knowledge Graphs.
  \item DeepProbLog framework (more is provided in Appendix C (in Section 8)):\\
  - \textbf{Core Approach}: Hybrid (Plugs neural networks directly into a probabilistic logic programming engine).\\
  - \textbf{Type}: Probabilistic Logic (handling likelihoods, not contradictions).\\
  - \textbf{Strength}: Calculates  probabilities for complex logical outcomes.\\
  - \textbf{Application}:  Visual question answering and neuro-symbolic games.
  \item Differentiable Inductive Logic Programming (DILP) framework:\\
  - \textbf{Core Approach}: Integrative (Learns explicit, human-readable logical rules from raw data).\\
  - \textbf{Type}: Standard Binary Logic handled smoothly through calculus gradients.\\
  - \textbf{Strength}: High data-efficiency; learns broad rules from only a few human examples.\\
  - \textbf{Application}:  Rule induction and automated software programming.
  \item LLM + Symbolic Solver (LLM-SS) framework:\\
  - \textbf{Core Approach}: Modular/Agentic (A Large Language Model acts as the interface and translates natural language into standard computer code for a hardcoded symbolic solver).\\
  - \textbf{Type}: Classic Binary Logic or Constraint-Satisfaction.\\
  - \textbf{Strength}: High  natural language flexibility and conversational fluency.\\
  - \textbf{Application}:  Mathematical problem solving and multi-agent systems.
\end{enumerate}
Useful integration for our framework (point 1 above) can be done by extending it with methods of rule induction from row data used in DILP framework (point 4 above) because our framework provides the LLM component, probabilistic reasoning explained in this paper and parsing of natural language into conceptual structures of $IFOL_B$ and corresponding syntax of First-Order Logic formulae (analog to LLM-SS). The rule induction can produce the self-learned rules represented by logical implication and thus used by autoepistemic deduction process of $IFOL_B$ provided in \cite{Majk26},
 by preserving
 the following Key Structural Differences with the other frameworks presented above:
\begin{enumerate}
  \item Handling "Ignorance" and Data Contradictions:\\
  -\textbf{The Norm}:  Frameworks like DeepProbLog or LNN use probability bounds (e.g., "70 percent true"). If a neural network gives contradictory data, these systems often experience mathematical breakdown or confidently average out the conflicting information \cite{Chen25}.\\
  - \textbf{Our Advantage}: By using the Closed Knowledge Assumption (CKA), this framework natively flags data conflicts as "Inconsistent" and missing data as "Unknown". The robot acknowledges its own confusion instead of generating a broken or dangerous action.
  \item The Path to AGI, Language vs. Physical Embodiment:\\
  -\textbf{The Norm}:   LLM-Symbolic hybrid approaches use massive language models to reason. They excel at domain-agnostic benchmarks but lack true symbol grounding-meaning they do not understand how words relate to physical objects \cite{Chen25}.\\
  - \textbf{Our Advantage}:  This system uses Intensional First-Order Logic (IFOL). It bridges natural language directly with a robot's real-time sensorimotor experiences. The symbols are actively constructed by the robot's physical interactions, making it highly effective for embodied strong AI.
  \item Strict Safety Guarantees:\\
  -\textbf{The Norm}:   In integrative approaches like LNN, symbolic reasoning happens directly inside the neural network. While elegant, neural networks can still experience edge cases and hallucinate or produce unpredictable outputs \cite{Chen25}.\\
  - \textbf{Our Advantage}:   It acts as a hard security firewall. The neural network handles messy real-world perception, but the separate IFOL symbolic component possesses final veto power. An action is only executed if it mathematically passes deterministic safety axioms.
\end{enumerate}
From a research perspective, this probabilistic neuro-symbolic robot is best viewed as a \emph{semantically grounded cognitive architecture} a logically rigorous framework that integrates: neural perception (conceptually), symbolic knowledge representation, temporal reasoning,probabilistic inference, intensional semantics,
explicit self-representation.
So, we would be able to develop the interactive robots which learn and understand spoken language via multisensory grounding and internal robotic embodiment. Endowed with suitable $IFOL_B$ information-processing biases, the robot's AI may develop that will be able to explain the reality it is confronted with, reason about it also probabilistically, and find adaptive solutions, making it Strong AI.
%
%

%

\section{Appendix: From Statistical Physics to Probabilistic $IFOL_B$ \label{sec:Helmholtz}}
In statistical physics, free energy is the bridge between the microscopic world of atoms and macroscopic thermodynamics. It represents the portion of total energy available to do work and acts as a generating function from which all physical properties (such as pressure, heat capacity, and magnetization) can be derived.

The foundational link between statistical mechanics and thermodynamics is the partition function, denoted as $\Z$. It is the sum of all possible Boltzmann factors over all microstates, formally defined as
$$\Z = \sum_i \mathrm{e}^{-\beta E_i}$$
 where $E_i$ is total energy of the microstate $i$, and $\beta = \frac{1}{k_B T}$ is the thermodynamic inverse temperature. The \emph{free energy} of a system is defined directly from the partition function via the following logarithmic relationship:
 $$F = - \frac{1}{\beta}ln \Z = - k_BT ln \Z$$
 Depending on the thermodynamic constraints (which variables are held constant), different statistical ensembles yield different free energies. In what follows we will use the Helmholtz free energy, used in the canonical ensemble where temperature $T$, Volume $V$, and the number of atoms $N$ are held constant.  such that the free energy reaches its minimum at thermal equilibrium.
 In Helmholtz case, the free energy is defined by
 \begin{equation}\label{eq:Helm}
 F = U - TE
 \end{equation}
 where $U$ is internal energy of system and $E$ is the entropy. So that the minimal free energy is obtained in the thermal equilibrium (the constant temperature $T$) with minimal $U$ and maximal entropy $E$.

 This is just our AGI case, because in $IFOL_B$ we have the constant number of (ground) atoms in Herbrand base $H$, and the constant "volume" represented by the  sample set $S$ in (\ref{eq:IFOLpstructure}), and "minimal internal energy" $U$ is obtained by the atomic knowledge database $\K$ based on the CKA, while the maximal entropy $E$ which defines the probability density function $KI:S \rightarrow X$ is defined by Shannon's principle for maximal information entropy in (\ref{eq:KIentropy}).

 The probabilistic expansion of knowledge database $\K$, as result of introduction of the probability density function $KI$, and transformation generated by introduction of Nilsson's probability structure $(S_i,\P(S_i), \mu_i)$ of global/local symmetry in \emph{current} many-valued model $\textbf{v}^*:\L_0 \rightarrow X$, is defined for all sentences in $\L_0$  by:
 \begin{equation}\label{eq:Helm2}
 \K_i^+ = \{w_N(\lessdot\phi\gtrdot,u,p_u)~|~\phi\in \L_0, \textbf{v}^*(\phi) = \bot, u \in X~~and ~p_u =\sum_{v\in S_i, v^*(\phi) = u}KI(v)\}
 \end{equation}
 Consequently, for AGI, the minimal total "internal energy" of robot's cognitive system  is its \emph{internal knowledge}
 \begin{equation}\label{eq:Helm32}
 U = \K \bigcup \K_i^+
\end{equation}
in this "thermal equilibrium" obtained after introduction of probability density function $KI$, during which $T$ is a constant, representing the transformation of the maximal information entropy $E$ (which defines function $KI$) into additional probabilistic knowledge $\K_i^+$, as can be shown from (\ref{eq:Helm}),

$\K = F = U- TE$ 
$ = (\K + \K_i^+) - TE$ ~~~~~~~~~~~~from (\ref{eq:Helm32}) \\
so that
$$ \K_i^+ = T E$$
and hence, this analogy between statistical physics and $IFOL_B$ AGI, can be summarized by the following table:\\\\
\begin{tabular}{|c|c|c|}
  \hline
  &&\\
    & \textbf{Statistical Physics} & \textbf{AGI} \\
  \hline
  &&\\
  \textbf{Maximal entropy} & $E$ & $E= max_{KI}(- \sum_{v\in S}KI(v) \cdot logKI(v))$ \\
   &&\\
   \hline
   &&\\
   \textbf{Minimal internal} & U & $U =\K + \K_i^+$ \\
    \textbf{energy}&&\\
   \hline
   &&\\
  T & Temperature & $T =  \frac{\K_i^+}{E}$, ~~~i.e. transformation \\
  &&$~~T:E~\mapsto ~ \K_i^+$\\
  \hline
  &&\\
  \textbf{Helmholtz } & $F = U-TE $ & $F = \K~~~~~~~~$ Knowledge DB (CKA)\\
  \textbf{free energy}&&\\
  \hline
\end{tabular}
 .\\\\

In thermodynamics, free energy is a measure of the amount of
work that can be extracted from a physical system. And, from above, in AGI, free energy is a measure of the amount of
knowledge $\K$ that can be extracted from a symbolic logic system. \\
It is interesting how in AGI the knowledge takes the place of the physical energy: in effect, in the physics the energy is mean that can be used to produce the useful physical work, while in the abstract cognitive space, the knowledge is the mean from which we can obtain useful deductions (a kind of intellectual work).
\\\\
There are interesting similarities with Karl Friston's Active Inference:\\\\
\begin{tabular}{|c|c|}
  \hline
  &\\
     \textbf{Active Inference} & $\text{IFOL}_{B}$ \textbf{Neuro-symbolic Approach} \\
  \hline
  &\\
  probabilistic beliefs &probabilistic logic\\
   \hline
   &\\
  generative model&   symbolic knowledge base\\
   \hline
   &\\
   prediction error& logical inference\\
  \hline
  &\\
  Bayesian updates & temporal probabilistic reasoning\\
  \hline
  &\\
  latent states & intensional knowledge\\
 \hline
\end{tabular}
.\\\\\\
\textbf{The symmetry group }$G_\K$ \textbf{of knowledge transformations}:\\\\
Consequently, we can introduce a formal definition of the group of knowledge transformations that preserves the current knowledge database $\K$ (free Helmholtz energy in the first table above):
\begin{definition} The symmetry group $G_\K = (\circ, e_i, i\geq 0)$ of knowledge transformations\label{sec:Kgroup}, for a fixed current world $\textbf{I}_B^* = is_H(\textbf{v}) \in \W$ of current Herbrand interpretation $\textbf{v}:H \rightarrow X$, is composed by the executed robot's transformations   $e_i =  (S_i,\P(S_i),\mu_i)$, $i\geq 1$, equal to introduced Nilsson's probabilistic structures for global/local probabilistic knowledge extensions,
 $$e_i: \K \mapsto \K \bigcup \K_i^+$$
  with identity element  with empty sample space $e_0 =(\emptyset,\emptyset, \mu_\emptyset): \K \mapsto \K$  produces empty probabilistic knowledge  $\K_{\emptyset}^+= \emptyset$.
\\
  The binary operation of the group "$\circ$", for any two transformations $e_i$ and $e_k$, is defined by
  \begin{equation} \label{eq:groupop}
e_n = e_i\circ e_k =
 \left\{
    \begin{array}{ll}
 (S_i\biguplus S_k, \P(S_i)\biguplus\P(S_k), \mu_i\biguplus\mu_k), & \hbox{if $~~i\neq k$}\\
      e_k, & \hbox{otherwise}
    \end{array}
  \right.
\end{equation}
where $\biguplus$ is disjoint union, so that for $Y \in  \P(S_i)\biguplus\P(S_k)$,
\begin{equation} \label{eq:groupop}
(\mu_i\biguplus\mu_k)(Y) =
 \left\{
    \begin{array}{ll}
 \mu_i(Y), & \hbox{if $~~Y \in \P(S_i)$}\\
      \mu_k(Y), & \hbox{otherwise}
    \end{array}
  \right.
\end{equation}
So, this resulting knowledge transformation is represented by
  $$e_n = e_i\circ e_k: \K \mapsto \K \bigcup (\K_i^+\bigcup\K_k^+)$$
and hence $e_n$ produces the cumulative probabilistic knowledge $\K_i^+\bigcup\K_k^+$.
\end{definition}
The global symmetry group of transformations, for given current Herbrand interpretation $\textbf{v}:H \rightarrow X$, has only two elements: the identity element $e_0$ and the unique element of global knowledge transformation in this possible world, defined by (\ref{eq:IFOLpstructure}) in Definition \ref{def:mvR}.

For local symmetry transformations, each time that robot needs to compute the probability for unknown facts (to have some true value), necessary for some real-time decisions, a new element will be added to this group $G_\K$, and the probability knowledge will be cumulatively incremented.

Any change of possible world, thus of  Herband interpretation (changing $\K$ by learning (or by logical inference) the truth-value of some ground atom in Herbrand base $H$,  or by extending atomic knowledge database $\K$ with a new ground atom (which had previously unknown value $\bot$) which obtained some logical value in $\{f,t,\top\}$,
deletes all extended probabilistic knowledge obtained in previous knowledge database $\K$, \\
So, in this new current world of knowledge, robot will generate by symmetry transformations (global or local) the new symmetry group $G_\K$ for this new knowledge database $\K$.
Consequently, any current atomic knowledge database $\K$ \emph{remains invariant} under the group of transformations in $G_\K$.

\newpage
\section{Appendix B: Neuro-symbolic Archetuture}
. \\\\\\
 \begin{figure}
$\vspace*{-44mm}$
\centering{
 \includegraphics[scale= 1.0]{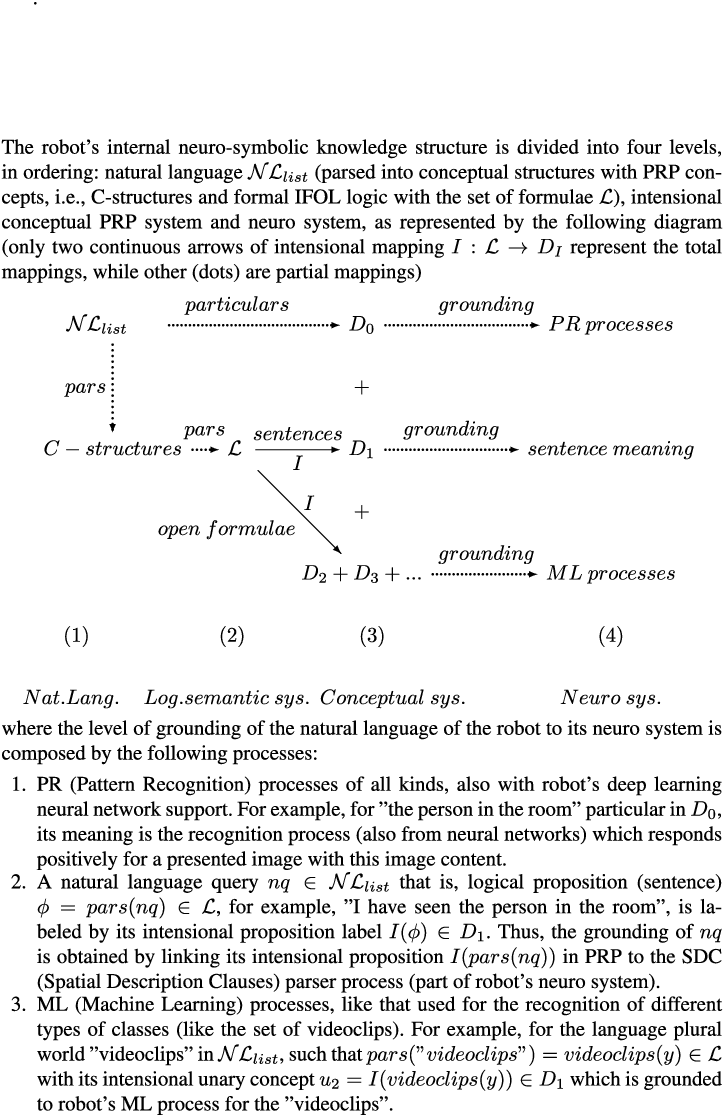} }
  $\vspace*{-11mm}$
 \end{figure}
\newpage
\section{Appendix C: Comparation with DeepProbLog \label{sec:Helmholtz}}
The previous version of neuro-symbolic system developed in last ten years, is pure probabilistic model obtained by extension of the ProbLog, developed 20 years ago, (De Raedt et al., \cite{LuKH07} symbolic logic system by neural networks used to compute the probabilities of the ground atoms \cite{MDKL18}. However, the ProbLog is based on the  2-valued predicate logic Prolog (\emph{Programming} Logic Language, used by programmers to develop different kinds of programs in AI applications in that time), which is not a good candidate for human-like modern AGI for different reasons: the only 2-valued truth-based semantics and computability limits. This choice of course is made more than 30 years ago, when we had only standard two-valued FOL and its logic subsystems (restrictions).

In effect, the authors of DeepProbLob, described their neuro-symbolic system as follows \cite{MDKL18}:

 "\emph{We introduce DeepProbLog which has a unique set of features: (i) it is a programming language that supports neural networks and machine learning, and it has a well-defined semantics (as an extension of Prolog, it is Turing equivalent); (ii) it integrates logical reasoning with neural networks; so both symbolic and subsymbolic representations and inference; (iii) it integrates probabilistic modeling, programming and reasoning with neural networks (as DeepProbLog extends the probabilistic programming language ProbLog, which can be regarded as a very expressive directed graphical modeling language [De Raedt et al.,\cite{LKNP16}); (iv) it can be used to learn a wide range of probabilistic logical neural models from examples, including inductive programming....\\
 \\
 We start from an existing probabilistic logic programming language, ProbLog (De Raedt et al., \cite{LuKH07}), and extend it with the capability to process neural predicates. The idea is simple: in a probabilistic logic, atomic expressions of the form $q(t_1, ..., t_n)$ (aka tuples in a relational database) have a probability $p$. Consequently, the output of neural network components can be encapsulated in the form of "neural" predicates as long as the output of the neural network on an atomic expression can be interpreted as a probability. This simple idea is appealing as it allows us to retain all the essential components of the ProbLog language: the semantics, the inference mechanism, as well as the implementation.}"

 So, this long-time research for DeepProbLog, in the same period of my long.time resarch is very different: I worked with Prolog in the period of my PhD, 1994-1998, and considered that we need more expressive logic than this 2-valued logic programming, for development of AGI independent robots. So. from 2001 I have spend 21 years to define this new more human-like Intensional FOL, published in 2022 \cite{Majk22} working alone for this ambitious objectives without any grants and founds, and only after that to begin the foundation work for AGI robotics.

 Thus, just these facts explains why their neural-symbolic AGI is substantially different from my recent proposal (from 2023).
 My approach and DeepProbLog fundamentally differ in architectural direction, logic structures, and cognitive scale:   DeepProbLog operates as a hybrid framework where a neural network feeds probabilistic facts directly into a separate logic solver\footnote{In Prolog, falsity is not an explicit truth value but is instead represented by the failure of a goal to succeed. Because Prolog operates on the Closed World Assumption, anything it cannot logically prove to be true (or which explicitly fails) is evaluated as false. Which is very different in $\text{IFOL}_{B}$, where we have also other important explicit truth values, for unknown and inconsistent information and falsity is the first-class concept like the truth.}. In contrast, this framework functions as an integrative system, embedding Intensional First-Order Logic ($\text{IFOL}_{B}$) alongside Shannon's maximum entropy neural engines to construct a unified architecture specifically optimized for autonomous AGI robotics.

 \textbf{Architectural and Logical Comparison:}\\
 \begin{tabular}{|c|c|c|}
  \hline
      & $\text{IFOL}_{B}$ \textbf{Neuro-symbolic Approach} &\textbf{DeepProbLog} \\
  \hline
    \textbf{Primary Goal} & Self-reflective AGI robotics  & Data-efficient neuro-symbolic tasks\\
     &(Self vs. environment)& (e.g., MNIST addition)\\
     \hline
      \textbf{Logic Base} & Intensional First-Order Logic  & Extensional Probabilistic Prolog  \\
      &($\text{IFOL}_{B}$)&(ProbLog)\\
    \hline
     \textbf{Truth Values} & 4-valued Belnap Bilattice & Continuous probabilities \\
         &$\{f,t,\bot,\top\}$ & over 2-valued (\([0,1]\))\\
    \hline
     \textbf{Uncertainty} & Nilsson's structure/Shannon Entropy & Distribution Semantics \\
     \hline
     \textbf{Scalability } & Local/Global Symmetry  & Knowledge Compilation\\
     \textbf{Mechanism} &Transformations&  (d-DNNFs, sdd)\\
     \hline
     \textbf{Key Capability} & Formal Self-Reference  & End-to-end backpropagation\\
      &and Autoepistemic Reasoning&  through logic\\
    \hline
    \textbf{Human Brain} & Free Energy   & No\\
     \textbf{Compatibility} &Helmholtz/ Karl Friston&  \\
    \hline
   \end{tabular}
   .\\\\
   \textbf{Three Core Structural Differences}:
   \begin{enumerate}
   \item  \textbf{Managing Contradictions vs. Strict Probabilities}:\\
   - \textbf{DeepProbLog} assigns a probability to a fact (e.g., 0.7::smokes(person)). If sensor data yields flatly conflicting conclusions, it dilutes the probability distribution. This can lead to logic bottlenecks or failure to compile under intense real-world operational noise.\\
   - $\text{IFOL}_{B}$ uses Belnap’s 4-valued logic to accommodate contradictions explicitly. If a robot receives conflicting data, the state is categorized under \(\top \) (Inconsistent) or \(\bot \) (Unknown). It handles these as distinct semantic values rather than failing computationally, preserving system stability.
   \item \textbf{Intensional vs. Extensional Representation}:\\
   - \textbf{DeepProbLog} relies on classical extensional logic. It treats predicates as sets of objects, making it difficult for an agent to separate a real concept from its physical instance.\\
   -  $\text{IFOL}_{B}$ applies Intensional Logic. This allows an AGI robot to reason about abstract properties and its own internal mental state. The robot can explicitly represent and evaluate the mathematical concept of "the Self" as distinct from external physical structures.
   \item \textbf{Execution Bottlenecks and Robotics Scaling}:\\
   - \textbf{DeepProbLog} relies on algebraic operators that scale poorly during complex tasks, generating severe training slowdowns when processing highly complex data.\\
   - $\text{IFOL}_{B}$  mitigates this scaling bottleneck by pairing neural networks with \emph{Local Symmetry Transformations}. When a robot needs to make a split-second physical decision, it isolates a small subset of relevant predicates. This prevents the system from having to compute entire world-states simultaneously, facilitating real-time physical navigation.
   \end{enumerate}
\end{document}